\declaretheorem[name=Theorem]{theorem}
\declaretheorem[name=Definition,sibling=theorem]{definition}
\declaretheorem[name=Proposition,sibling=theorem]{proposition}
\declaretheorem[name=Example,sibling=theorem]{example}
\title{Computing Datalog Rewritings Beyond Horn Ontologies\thanks{This work
was supported by the Royal Society, the EPSRC projects Score!, Exoda, and
MaSI$^3$, and the FP7 project OPTIQUE.}}
\author{Bernardo Cuenca Grau$^1$ \and Boris Motik$^1$ \and Giorgos Stoilos$^2$ \and Ian Horrocks$^1$ \\
{}$^1$\,University of Oxford, UK \hspace{2cm} {}$^2$\,NTU Athens, Greece}
\begin{document}

\maketitle

\begin{abstract}
Rewriting-based approaches for answering queries over an OWL 2 DL ontology
have so far been developed mainly for Horn fragments of OWL 2 DL. In this
paper, we study the possibilities of answering queries over non-Horn
ontologies using datalog rewritings. We prove that this is impossible in
general even for very simple ontology languages, and even if $\Ptime = \NP$.
Furthermore, we present a resolution-based procedure for $\SHI$ ontologies
that, in case it terminates, produces a datalog rewriting of the ontology. We
also show that our procedure necessarily terminates on $\Bool$ ontologies---an
extension of OWL 2 QL with transitive roles and Boolean connectives.
\end{abstract}

\section{Introduction}

Answering conjunctive queries (CQs) over OWL 2 DL ontologies is a
computationally hard \cite{GHLS08a,DBLP:conf/cade/Lutz08}, but key problem in
many applications. Thus, considerable effort has been devoted to the
development of OWL 2 DL fragments for which query answering is tractable in
\emph{data complexity}, which is measured in the size of the data only. Most
languages obtained in this way are \emph{Horn}: ontologies in such languages
can always be translated into first-order Horn clauses. This includes the
families of `lightweight' languages such as DL-Lite \cite{CDLLR07b},
$\mathcal{EL}$ \cite{DBLP:conf/ijcai/BaaderBL05}, and DLP
\cite{DBLP:conf/www/GrosofHVD03} that underpin the QL, EL, and RL profiles of
OWL 2, respectively, as well as more expressive languages, such as
Horn-$\mathcal{SHIQ}$ \cite{DBLP:conf/ijcai/HustadtMS05} and
Horn-$\mathcal{SROIQ}$ \cite{DBLP:conf/ijcai/OrtizRS11}.

Query answering can sometimes be implemented via query rewriting: a rewriting
of a query $\Q$ w.r.t.\ an ontology $\T$ is another query $\Q'$ that captures
all information from $\T$ necessary to answer $\Q$ over an arbitrary data set.
Unions of conjunctive queries (UCQs) and datalog are common target languages
for query rewriting. They ensure tractability w.r.t.\ data complexity, while
enabling the reuse of optimised data management systems: UCQs can be answered
using relational databases \cite{CDLLR07b}, and datalog queries can be
answered using rule-based systems such as OWLim \cite{bishop2011owlim} and
Oracle's Semantic Data Store \cite{Wu08}. Query rewriting algorithms have so
far been developed mainly for Horn fragments of OWL 2 DL, and they have been
implemented in systems such as QuOnto \cite{DBLP:conf/aaai/AcciarriCGLLPR05},
Rapid \cite{Chortaras11}, Presto \cite{DBLP:conf/kr/RosatiA10}, Quest
\cite{DBLP:conf/kr/Rodriguez-MuroC12}, Clipper
\cite{DBLP:conf/aaai/EiterOSTX12}, Owlgres \cite{DBLP:conf/owled/StockerS08},
and Requiem \cite{Hector10a}.

Horn fragments of OWL 2 DL cannot capture \emph{disjunctive knowledge}, such
as `every student is either an undergraduate or a graduate'. Such knowledge
occurs in practice in ontologies such as the NCI Thesaurus and the
Foundational Model of Anatomy, so these ontologies cannot be processed using
known rewriting techniques; furthermore, no query answering technique we are
aware of is tractable w.r.t.\ data complexity when applied to such ontologies.
These limitations cannot be easily overcome: query answering in even the basic
non-Horn language $\ELU$ is $\coNP$-hard w.r.t.\ data complexity
\cite{DBLP:conf/lpar/KrisnadhiL07}, and since answering datalog queries is
$\Ptime$-complete, it may not be possible to rewrite an arbitrary $\ELU$
ontology into datalog unless ${\Ptime = \NP}$. Furthermore,
\citeA{Lutz:2012ug} showed that tractability w.r.t.\ data complexity cannot be
achieved for an arbitrary non-Horn ontology $\T$ with `real' disjunctions: for
each such $\T$, a query $\Q$ exists such that answering $\Q$ w.r.t.\ $\T$ is
$\coNP$-hard.

The result by \citeA{Lutz:2012ug}, however, depends on an interaction between
existentially quantified variables in $\Q$ and disjunctions in $\T$. Motivated
by this observation, we consider the problem of computing datalog rewritings
of \emph{ground} queries (i.e., queries whose answers must map all the
variables in $Q$ to constants) over non-Horn ontologies. Apart from allowing
us to overcome the negative result by \citeA{Lutz:2012ug}, this also allows us
to compute a rewriting of $\T$ that can be used to answer an arbitrary ground
query. Such queries form the basis of SPARQL, which makes our results
practically relevant. We summarise our results as follows.

In Section \ref{sec:NegativeResults}, we revisit the limits of datalog
rewritability for a language as a whole and show that non-rewritability of
$\ELU$ ontologies is independent from any complexity-theoretic assumptions.
More precisely, we present an $\ELU$ ontology $\T$ for which query answering
cannot be decided by a family of monotone circuits of polynomial size, which
contradicts the results by \citeA{Afrati:1995un}, who proved that fact
entailment in a fixed datalog program can be decided using monotone circuits
of polynomial size. Thus, instead of relying on complexity arguments, we
compare the lengths of proofs in $\ELU$ and datalog and show that the proofs
in $\ELU$ may be considerably longer than the proofs in datalog.

In Section \ref{sec:DatalogRewritings}, we present a three-step procedure that
takes a $\SHI$-ontology $\T$ and attempts to rewrite $\T$ into a datalog
program. First, we use a novel technique to rewrite $\T$ into a TBox
$\Omega_\T$ without transitivity axioms while preserving entailment of
\emph{all} ground atoms; this is in contrast to the standard techniques (see,
e.g., \cite{hms07reasoning}), which preserve entailments only of unary facts
and binary facts with roles not having transitive subroles. Second, we use the
algorithm by \citeA{hms07reasoning} to rewrite $\Omega_\T$ into a
\emph{disjunctive datalog program} $\DD(\Omega_\T)$. Third, we adapt the
knowledge compilation technique by \citeA{DBLP:journals/ai/Val05} and
\citeA{selman1996knowledge} to transform $\DD(\Omega_\T)$ into a datalog
program. The final step is not guaranteed to terminate in general; however, if
it terminates, the resulting program is a rewriting of $\T$.

In Section \ref{sec:Termination}, we show that our procedure always terminates
if $\T$ is a $\Bool$-ontology---a practically-relevant language that extends
OWL 2 QL with transitive roles and Boolean connectives.
\citeA{Artale09thedl-lite} proved that the data complexity of concept queries
in this language is tractable (i.e., $\NLogSpace$-complete). We extend this
result to all ground queries and thus obtain a goal-oriented rewriting
algorithm that may be suitable for practical use.

Our technique, as well as most rewriting techniques known in the literature,
is based on a sound inference system and thus produces only \emph{strong
rewritings}---that is, rewritings entailed by the original ontology. In
Section \ref{sec:LimitsStrong} we show that non-Horn ontologies exist that can
be rewritten into datalog, but that have no strong rewritings. This highlights
the limits of techniques based on sound inferences. It is also surprising
since all known rewriting techniques for Horn fragments of OWL 2 DL known to
us produce only strong rewritings.

The proofs of all of our technical results are given in \ifdraft{appendices
\ref{sec:Proofs-NegativeResults}--\ref{sec:Proofs-LimitsStrong}}{the
accompanying technical report \cite{TR}}.

\section{Preliminaries}\label{sec:Preliminaries}

We consider first-order logic without equality and function symbols.
Variables, terms, (ground) atoms, literals, formulae, sentences,
interpretations ${I = (\Delta^I, \cdot^I)}$, models, and entailment
($\models$) are defined as usual. We call a finite set of facts (i.e., ground
atoms) an \emph{ABox}. We write $\varphi(\vec{x})$ to stress that a
first-order formula $\varphi$ has free variables ${\vec{x} = x_1, \ldots,
x_n}$.

\subsubsection{Resolution Theorem Proving}

We use the standard notions of (Horn) clauses, substitutions (i.e., mappings
of variables to terms), and most general unifiers (MGUs). We often identify a
clause with the set of its literals. \emph{Positive factoring} (PF) and
\emph{binary resolution} (BR) are as follows, where $\sigma$ is the MGU of
atoms $A$ and $B$:
\begin{equation*}
    \text{PF:} \qquad  \frac{C \vee A \vee B}{C\sigma \vee A\sigma} \qquad \text{BR:} \qquad  \frac{C \vee A \quad D \vee \neg B}{(C \vee D)\sigma}
\end{equation*}
A clause $C$ is a \emph{tautology} if it contains literals $A$ and ${\neg A}$.
A clause $C$ subsumes a clause $D$ if a substitution $\sigma$ exists such that
each literal in $C\sigma$ occurs in $D$. Furthermore, $C$ $\theta$-subsumes
$D$ if $C$ subsumes $D$ and $C$ has no more literals than $D$. Finally, $C$ is
\emph{redundant} in a set of clauses $\S$ if $C$ is a tautology or if $C$ is
$\theta$-subsumed by another clause in $\S$.

\subsubsection{Datalog and Disjunctive Datalog}

A \emph{disjunctive rule} $r$ is a function-free first-order sentence of the
form ${\forall \vec{x} \forall \vec{z}.[\varphi(\vec{x},\vec{z}) \rightarrow
\psi(\vec{x})]}$, where tuples of variables $\vec{x}$ and $\vec{z}$ are
disjoint, $\varphi(\vec{x},\vec{z})$ is a conjunction of atoms, and
$\psi(\vec{x})$ is a disjunction of atoms. Formula $\varphi$ is the
\emph{body} of $r$, and formula $\psi$ is the \emph{head} of $r$. For brevity,
we often omit the quantifiers in a rule. A \emph{datalog rule} is a
disjunctive rule where $\psi(\vec{x})$ is a single atom. A (disjunctive)
datalog program $\P$ is a finite set of (disjunctive) datalog rules. Rules
obviously correspond to clauses, so we sometimes abuse our definitions and use
these two notions as synonyms. The \emph{evaluation} of $\P$ over an ABox $\A$
is the set $\P(\A)$ of facts entailed by ${\P \cup \A}$.

\subsubsection{Ontologies and Description Logics}

A DL \emph{signature} is a disjoint union of sets of \emph{atomic concepts},
\emph{atomic roles}, and \emph{individuals}. A \emph{role} is an atomic role
or an \emph{inverse role} $R^-$ for $R$ an atomic role; furthermore, let
${\inv{R} = R^-}$ and ${\inv{R^-} = R}$. A \emph{concept} is an expression of
the form $\top$, $\bot$, $A$, ${\neg C}$, ${C_1 \sqcap C_2}$, ${C_1 \sqcup
C_2}$, ${\exists R.C}$, ${\forall R.C}$, or $\eself{R}$, where $A$ is an
atomic concept, $C_{(i)}$ are concepts, and $R$ is a role. Concepts
$\eself{R}$ correspond to atoms $R(x,x)$ and are typically not included in
$\SHI$; however, we use this minor extension in Section
\ref{sec:Transitivity}. A $\SHI$-TBox $\T$, often called an \emph{ontology},
is a finite set of axioms of the form ${R_1 \sqsubseteq R_2}$ (\emph{role
inclusion axioms} or RIAs), $\Tra{R}$ (\emph{transitivity axioms}), and ${C_1
\sqsubseteq C_2}$ (\emph{general concept inclusions} or GCIs), where $R_{(i)}$
are roles and $C_{(i)}$ are concepts. Axiom ${C_1 \equiv C_2}$ abbreviates
${C_1 \sqsubseteq C_2}$ and ${C_2 \sqsubseteq C_1}$. Relation $\roleh{\T}$ is
the smallest reflexively--transitively closed relation such that ${R
\roleh{\T} S}$ and ${\inv{R} \roleh{\T} \inv{S}}$ for each ${R \sqsubseteq S
\in \T}$. A role $R$ is \emph{transitive} in $\T$ if ${\Tra{R} \in \T}$ or
${\Tra{\inv{R}} \in \T}$. Satisfaction of a $\SHI$-TBox $\T$ in an
interpretation ${I = (\Delta^I, \cdot^I)}$, written ${I \models \T}$, is
defined as usual \cite{DBLP:conf/dlog/2003handbook}.

An $\ALCHI$-TBox is a $\SHI$-TBox with no transitivity axioms. An $\ELU$-TBox
is an $\ALCHI$-TBox with no role inclusion axioms, inverse roles, concepts
$\eself{R}$, or symbols $\bot$, $\forall$, and $\neg$. A $\Bool$-TBox is a
$\SHI$-TBox that does not contain concepts of the form $\forall R.C$, and
where ${C = \top}$ for each concept of the form ${\exists R.C}$. The notion of
\emph{acyclic} TBoxes is defined as usual \cite{DBLP:conf/dlog/2003handbook}.

A $\SHI$-TBox $\T$ is \emph{normalised} if $\forall$ does not occur in $\T$,
and $\exists$ occurs in $\T$ only in axioms of the form ${\exists R.C
\sqsubseteq A}$, ${\eself{R} \sqsubseteq A}$, ${A \sqsubseteq \exists R.C}$,
or ${A \sqsubseteq \eself{R}}$. Each $\SHI$-TBox $\T$ can be transformed in
polynomial time into a normalised $\SHI$-TBox that is a model-conservative
extension of $\T$.

\subsubsection{Queries and Datalog Rewritings}

A \emph{ground query} (or just a \emph{query}) $\Q(\vec{x})$ is a conjunction
of function-free atoms. A substitution $\sigma$ mapping $\vec{x}$ to constants
is an \emph{answer} to $\Q(\vec{x})$ w.r.t.\ a set $\F$ of first-order
sentences and an ABox $\A$ if ${\F \cup \A \models \Q(\vec{x})\sigma}$;
furthermore, $\cert{\Q}{\F}{\A}$ is the set of all answers to $\Q(\vec{x})$
w.r.t.\ $\F$ and $\A$.

Let $\Q$ be a query. A datalog program $\P$ is a \emph{$\Q$-rewriting} of a
finite set of sentences $\F$ if ${\cert{\Q}{\F}{\A} = \cert{\Q}{\P}{\A}}$ for
each ABox $\A$. The program $\P$ is a \emph{rewriting of $\F$} if $\P$ is a
$\Q$-rewriting of $\F$ for each query $\Q$. Such rewritings are \emph{strong}
if, in addition, we also have ${\F \models \P}$.

\section{The Limits of Datalog Rewritability}\label{sec:NegativeResults}

Datalog programs can be evaluated over an ABox $\A$ in polynomial time in the
size of $\A$; hence, a $\coNP$-hard property of $\A$ cannot be decided by
evaluating a fixed datalog program over $\A$ unless ${\Ptime = \NP}$.
\citeA{DBLP:conf/lpar/KrisnadhiL07} showed that answering ground queries is
$\coNP$-hard in data complexity even for acyclic TBoxes expressed in
$\ELU$---the simplest non-Horn extension of the basic description logic
$\mathcal{EL}$. Thus, under standard complexity-theoretic assumptions, an
acyclic $\ELU$-TBox and a ground query $\Q$ exist for which there is no
$\Q$-rewriting of $\T$. In this section, we show that this holds even if
${\Ptime = \NP}$.

\begin{restatable}{theorem}{nonRewritableGeneralELU}\label{th:non-Rewritable-General-ELU}
    An acyclic $\ELU$-TBox $\T$ and a ground CQ $\Q$ exist such that $\T$ is
    not $\Q$-rewritable.
\end{restatable}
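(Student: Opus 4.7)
The strategy is exactly the one hinted at in the excerpt: combine Afrati's result with a classical monotone circuit lower bound. Observe that, for a fixed TBox $\T$ and ground query $\Q$, the function sending an ABox $\A$ (encoded as a Boolean vector indexed by the possible ground atoms over the individuals of $\A$) to a bit indicating whether $\Q$ holds in $\cert{\Q}{\T}{\A}$ is monotone, since adding facts to $\A$ can only add entailments. Hence, if $\T$ had a datalog $\Q$-rewriting $\P$, Afrati's theorem would supply a family of monotone Boolean circuits of size polynomial in $|\A|$ that decides this function. To prove the theorem it therefore suffices to exhibit an acyclic $\ELU$-TBox $\T$ and a ground CQ $\Q$ for which the induced monotone function requires super-polynomial monotone circuits.

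The plan is to pick a concrete monotone Boolean function $f$ known from circuit complexity to require super-polynomial monotone circuits---for example, the \textsc{Clique} function of Razborov or the \textsc{Matching} function of Alon and Boppana---and to engineer $\T$, $\Q$ and a uniform polynomial-time ABox encoding of the inputs of $f$ so that the answer to $\Q$ coincides with the value of $f$ on the encoded input. The disjunctive axioms ${A \sqsubseteq B_1 \sqcup B_2}$ of $\ELU$ are to be used to drive case analysis: individuals representing the potential elements of a clique (or matching) are forced into disjoint classes, while conjunctions in left-hand sides of GCIs and existentials on the right propagate these choices along a bounded-depth acyclic cascade of auxiliary concept names, in such a way that the query $\Q$ derives a distinguished atom exactly in those case-splits witnessing $f = 1$. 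The acyclicity is preserved by stratifying the auxiliary concepts so that each level depends only on strictly earlier levels.

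Assuming the encoding can be arranged, the conclusion is then automatic: a hypothetical datalog $\Q$-rewriting would, via Afrati's theorem, yield polynomial-size monotone circuits for $f$, contradicting the chosen lower bound; and since this argument does not appeal to any separation of complexity classes, the non-rewritability holds unconditionally in particular even if $\Ptime = \NP$.

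The main obstacle I would expect is the second step: acyclic $\ELU$ is extremely weak (no negation, no universal quantification, no inverse roles, no recursion), so one cannot simply quantify over subsets of vertices and test a predicate. The key design question is how to make disjunctive choices made at the level of the ABox interact with existentials in the TBox in a way that genuinely forces the evaluation of $f$, rather than being collapsible to a fixed polynomial computation. I would attack this by introducing, for each input bit, an individual whose type is determined by a single disjunctive GCI, and then using a carefully layered family of GCIs of the form ${B_{i_1} \sqcap \dots \sqcap B_{i_k} \sqsubseteq D}$ (with the $B_{i_j}$ propagated along existentials to a common witness individual) so that the derivation of $D$ on that witness mirrors the DNF-like structure underlying $f$. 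Verifying that this encoding is both acyclic and faithful---i.e.\ that spurious derivations are impossible---is where I would expect most of the technical effort to lie.
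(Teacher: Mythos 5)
Your high-level framework is exactly the paper's: certain answers are monotone in the ABox, Afrati's theorem converts a hypothetical datalog $\Q$-rewriting into polynomial-size monotone circuits, and a monotone circuit lower bound yields an unconditional contradiction. But the entire mathematical content of the theorem lies in the step you explicitly defer---exhibiting the acyclic $\ELU$-TBox and query---and your proposed route to it is likely to fail for a structural reason. Certain-answer entailment ${\T \cup \A \models \Q}$ is a \emph{universal} condition (truth in all models, i.e., under every resolution of the disjunctive choices), so what a fixed $\ELU$-TBox can naturally encode is a monotone property of $\coNP$ flavour: ``for every assignment induced by the disjunctions, something bad happens.'' The functions you propose, \emph{Clique} and \emph{Matching}, are of \emph{existential} ($\NP$) flavour in their monotone-hard direction. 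A fixed ground CQ has a fixed number of variables, so it cannot itself search for a clique or matching of size growing with the input, and disjunctive GCIs cannot simulate an unbounded existential guess through universal entailment; the natural trick of complementing (encoding non-edges as facts) inverts monotonicity and destroys the projection argument. This is precisely why the paper instead uses \emph{non-3-colorability}, whose universal structure (``every 3-colouring has a monochromatic edge'') aligns perfectly with a disjunctive guess ${V \sqsubseteq R \sqcup G \sqcup B}$ plus concepts ${F_X \equiv X \sqcap \exists \text{edge}.X}$ detecting a monochromatic edge, and for which the monotone lower bound is already supplied by Item 2 of the adapted Afrati result.

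A second, smaller gap: you ask only for a ``uniform polynomial-time ABox encoding'' of the inputs of $f$. That is not enough to transfer a monotone circuit lower bound, since monotone circuits cannot compute arbitrary polynomial-time reductions. What is needed---and what the paper verifies in detail---is a \emph{monotone polynomial projection}: each input bit of $f$ must map to a single ABox bit or a constant, so that a monotone circuit for query answering can be turned into a monotone circuit for $f$ by mere input relabelling. Your bit-per-fact sketch is compatible with this, but the requirement should be stated and checked, as it constrains the encoding (e.g., the paper's map sends the edge bit $x_{i,j}$ to both $y_{i,j}^{\text{edge}}$ and $y_{j,i}^{\text{edge}}$ and fixes all remaining bits to constants).
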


Our proof uses several notions from circuit complexity \cite{Wegener87}, and
results of this flavour compare the sizes of proofs in different formalisms;
thus, our result essentially says that proofs in $\ELU$ can be significantly
longer than proofs in datalog. Let $<$ be the ordering on Boolean values
defined by ${\false < \true}$; then, a Boolean function $f$ with $n$ inputs is
\emph{monotone} if ${f(x_1, \ldots, x_n) \leq f(y_1, \ldots, y_n)}$ holds for
all $n$-tuples of Boolean values ${x_1, \ldots, x_n}$ and ${y_1, \ldots, y_n}$
such that ${x_i \leq y_i}$ for each ${1 \leq i \leq n}$. A decision problem
can be seen as a family of Boolean functions ${\{ f_n \}}$, where $f_n$
decides membership of each $n$-bit input. If each function $f_n$ is monotone,
then $f_n$ can be realised by a monotone Boolean circuit $C_n$ (i.e., a
circuit with $n$ input gates where all internal gates are AND- or OR-gates
with unrestricted fan-in); the size of $C_n$ is the number of its edges. The
family of circuits ${\{ C_n \}}$ corresponding to ${\{ f_n \}}$ has polynomial
size if a polynomial $p(x)$ exists such that the size of each $C_n$ is bounded
by $p(n)$.

We recall how non-3-colorability of an undirected graph $G$ with $s$ vertices
corresponds to monotone Boolean functions. The maximum number of edges in $G$
is ${m(s) = s(s-1)/2}$, so graph $G$ is encoded as a string $\vec{x}$ of
$m(s)$ bits, where bit $x_{i,j}$, ${1 \leq i < j \leq s}$, is $\true$ if and
only if $G$ contains an edge between vertices $i$ and $j$. The
non-3-colorability problem can then be seen as a family of Boolean functions
${\{ f_{m(s)} \}}$, where function $f_{m(s)}$ handles all graphs with $s$
vertices and it evaluates to $\true$ on an input $\vec{x}$ iff the graph
corresponding to $\vec{x}$ is non-3-colourable. Functions $f_n$ such that ${n
\neq m(s)}$ for all $s$ are irrelevant since no graph is encoded using that
many bits.

We prove our claim using a result by \citeA{Afrati:1995un}: if a decision
problem cannot be solved using a family of monotone circuits of polynomial
size, then the problem also cannot be solved by evaluating a fixed datalog
program, regardless of the problem's complexity. We restate the result as
follows.

\begin{restatable}{theorem}{afrati}[Adapted from \citeauthor{Afrati:1995un} \citeyear{Afrati:1995un}]\label{th:datalog-inexpressibility}
    \quad \phantom{a}

    \begin{enumerate}
        \item Let $\P$ be a fixed datalog program, and let $\alpha$ be a fixed
        fact. Then, for an ABox $\A$, deciding ${\P \cup \A \models \alpha}$
        can be solved by monotone circuits of polynomial size.

        \item The non-3-colorability problem cannot be solved by monotone
        circuits of polynomial size.
    \end{enumerate}
\end{restatable}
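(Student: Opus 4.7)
The theorem has two parts that call for completely different techniques: part (1) is a constructive upper bound, obtained by simulating bottom-up datalog evaluation by a circuit, whereas part (2) is a monotone circuit lower bound for a specific combinatorial predicate. The plan is to handle them separately, with part (2) being the real source of difficulty.

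For part (1), I would unroll the seminaive evaluation of a fixed datalog program $\P$ into a layered AND/OR circuit. Because $\P$ is fixed, so are the maximum arity $a$ of its predicates and the maximum number $v$ of variables in any rule. On an ABox over $n$ constants, the number of possible ground facts is $N = O(n^a)$, and bottom-up evaluation reaches its least fixpoint within $N$ rounds. The circuit takes one Boolean input per possible input fact on these constants. For each ${i \in \{0, \ldots, N\}}$ and each ground fact $\beta$, I introduce an OR-gate $g_{i,\beta}$ intended to evaluate to \true{} iff $\beta$ is derivable in at most $i$ rounds: $g_{0,\beta}$ is wired directly from the input bit for $\beta$, while $g_{i,\beta}$ for ${i \geq 1}$ is fed by $g_{i-1,\beta}$ together with one AND-gate per rule ${r \in \P}$ and per grounding substitution $\sigma$ whose head equals $\beta$, taking inputs from the body gates $g_{i-1,B\sigma}$. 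The output is $g_{N,\alpha}$. Each rule admits at most $n^v$ substitutions, so the total number of gates is polynomial in $n$; all gates are AND/OR, so the circuit is monotone; and correctness follows from the standard fixpoint semantics of datalog.

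For part (2), the key observation is that non-3-colourability is monotone in the edge indicators, since adding edges to a non-3-colourable graph can only preserve non-3-colourability. The plan is to derive a monotone circuit lower bound via Razborov's approximation method. The cleanest route is to reduce from a problem already known to require superpolynomial monotone circuits, such as the ${k}$-clique function: one builds a monotone projection that maps edges of a clique-instance graph to edges of a colouring-instance graph in such a way that the projected graph is non-3-colourable iff the original graph contains a ${k}$-clique, so that any polynomial monotone circuit for non-3-colourability, composed with the projection, would contradict Razborov's $n^{\Omega(k)}$ bound on clique. An alternative is to apply the approximation method directly to non-3-colourability, choosing minimally non-3-colourable graphs as positive test inputs and close 3-colourable neighbours as negative test inputs, and arguing that no small family of lattice approximators can consistently separate them.

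Designing either the monotone projection or the appropriate families of test inputs is the main obstacle, since monotone lower bounds are brittle under small syntactic modifications of the target function; part (1), by contrast, is essentially bookkeeping. In practice I would follow the detailed argument already carried out by \citeA{Afrati:1995un}, since our contribution is only the adaptation to the datalog-entailment setting stated here.
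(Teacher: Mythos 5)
Your part~(1) is correct and is the standard argument, matching what the cited source does: since $\P$ is fixed, the number of possible ground facts over $n$ constants is polynomial, the least fixpoint is reached within that many rounds, and the unrolled gates $g_{i,\beta}$ use only AND/OR, so the circuit is monotone and of polynomial size. Note, though, that for this theorem the paper itself offers no proof at all---it is imported as a black box (``Adapted from \citeA{Afrati:1995un}'') and the appendix proves only Theorem~\ref{th:non-Rewritable-General-ELU} on top of it---so the only meaningful comparison is with the cited source, and your closing remark that one should ``follow the detailed argument already carried out by \citeA{Afrati:1995un}'' is in fact exactly the paper's stance.

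There is, however, a genuine flaw in your ``cleanest route'' for part~(2). A monotone projection $\rho$ with $\mathrm{CLIQUE}(\vec{x}) = \mathrm{NON3COL}(\rho(\vec{y}))$ is in particular a nonuniform polynomial-size many-one reduction from the $\NP$-complete clique problem to the $\coNP$-complete non-3-colourability problem; its existence would yield ${\NP \subseteq \coNP/\mathrm{poly}}$ and hence a collapse of the polynomial hierarchy. So no gadget construction can deliver the projection you propose: the obstacle is not the ``brittleness'' of monotone lower bounds that you flag, but an in-principle barrier (modulo standard complexity assumptions). You have the direction of lower-bound transfer right---to conclude hardness of $\mathrm{NON3COL}$, clique would have to be a projection \emph{of} it---but that is precisely the impossible direction; transfers of this kind must instead start from hard monotone functions that live in $\Ptime$ (or the bound must be proved directly). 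Your fallback---applying the approximation method directly to non-3-colourability with suitable positive and negative test graphs, or simply deferring to the argument of \citeA{Afrati:1995un}---is the viable route and should replace the clique projection as the primary plan.
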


To prove Theorem \ref{th:non-Rewritable-General-ELU}, we present a TBox $\T$
and a ground CQ $\Q$ that decide non-3-colorability of a graph encoded as an
ABox. Next, we present a family of monotone Boolean functions ${\{ g_{n(u)}
\}}$ that decide answering $\Q$ w.r.t.\ $\T$ an arbitrary ABox $\A$. Next, we
show that a monotone circuit for arbitrary $f_{m(s)}$ can be obtained by a
size-preserving transformation from a circuit for some $g_{n(u)}$; thus, by
Item 2 of Theorem \ref{th:datalog-inexpressibility}, answering $\Q$ w.r.t.\
$\T$ cannot be solved using monotone circuits of polynomial size. Finally, we
show that existence of a rewriting for $\Q$ and $\T$ contradicts Item 1 of
Theorem \ref{th:datalog-inexpressibility}.

\section{Computing Rewritings via Resolution}\label{sec:DatalogRewritings}

Theorem \ref{th:non-Rewritable-General-ELU} is rather discouraging since it
applies to one of the simplest non-Horn languages. The theorem's proof,
however, relies on a specific TBox $\T$ that encodes a hard problem (i.e.,
non-3-colorability) that is not solvable by monotone circuits of polynomial
size. One can expect that non-Horn TBoxes used in practice do not encode such
hard problems, and so it might be possible to rewrite such TBoxes into
datalog.

\begin{table}[t]
\centering
\caption{Example TBox $\TEX$}\label{tab:example-TBox}
\begin{tabular}{cc}
    \hline
    $\gamma_1$  & $\Student  \sqsubseteq \Grad \sqcup \Undergrad$ \\
    $\gamma_2$  & $\Course \sqsubseteq \GradCo \sqcup \UndergradCo$ \\
    \hline
    $\gamma_3$  & $\PHD \sqsubseteq \exists \takes.\PHDco$ \\
    \hline
    $\gamma_4$  & $\PHDco \sqsubseteq \GradCo$ \\
    $\gamma_5$  & $\exists \takes.\GradCo \sqsubseteq \Grad$ \\
    $\gamma_6$  & $\Undergrad \sqcap \exists \takes.\GradCo \sqsubseteq \bot$ \\
    \hline
\end{tabular} 
\end{table} 

We illustrate this intuition using the TBox $\TEX$ shown in Table
\ref{tab:example-TBox}. Axioms $\gamma_4$--$\gamma_6$ correspond to datalog
rules, whereas axioms $\gamma_1$--$\gamma_3$ represent disjunctive and
existentially quantified knowledge and thus do not correspond to datalog
rules. We will show that $\TEX$ can, in fact, be rewritten into datalog using
a generic three-step method that takes a normalised $\SHI$-TBox $\T$ and
proceeds as follows.
\begin{enumerate}
    \item[\textbf{S1}] Eliminate the transitivity axioms from $\T$ by
    transforming $\T$ into an $\ALCHI$-TBox $\Omega_\T$ and a set of datalog
    rules $\Xi_\T$ such that facts entailed by ${\T \cup \A}$ and ${\Omega_\T
    \cup \Xi_\T(\A)}$ coincide for each ABox $\A$. This step extends the known
    technique to make it complete for facts with roles that have transitive
    subroles in $\T$.

    \item[\textbf{S2}] Apply the algorithm by \citeA{hms07reasoning} to
    transform $\Omega_\T$ into a disjunctive datalog program $\DD(\Omega_\T)$.

    \item[\textbf{S3}] Transform $\DD(\Omega_\T)$ into a set of datalog rules
    $\Phorn$ using a variant of the knowledge compilation techniques by
    \citeA{selman1996knowledge} and \citeA{DBLP:journals/ai/Val05}.
\end{enumerate}
Step \textbf{S3} may not terminate for an arbitrary $\SHI$-TBox $\T$; however,
if it terminates (i.e., if $\Phorn$ is finite), then ${\Phorn \cup \Xi_\T}$ is
a rewriting of $\T$. Furthermore, in Section \ref{sec:Termination} we show
that step \textbf{S3} always terminates if $\T$ is a $\Bool$-TBox. We thus
obtain what is, to the best of our knowledge, the first goal-oriented
rewriting algorithm for a practically-relevant non-Horn fragment of OWL 2 DL.

\subsection{Transitivity}\label{sec:Transitivity}

We first recapitulate the standard technique for eliminating transitivity
axioms from $\SHI$-TBoxes.

\begin{definition}\label{def:transitivity-standard}
    Let $\T$ be a normalised $\SHI$-TBox, and let $\Theta_\T$ be obtained from
    $\T$ by removing all transitivity axioms. If $\T$ is a $\Bool$-TBox, then
    let ${\Upsilon_\T = \Theta_\T}$; otherwise, let $\Upsilon_\T$ be the
    extension of $\Theta_\T$ with axioms
    \begin{displaymath}
        \exists R.A \sqsubseteq \transatom{B}{R}    \qquad  \exists R.\transatom{B}{R} \sqsubseteq \transatom{B}{R} \qquad  \transatom{B}{R} \sqsubseteq B
    \end{displaymath}
    for each axiom ${\exists S.A \sqsubseteq B \in \T}$ and each transitive
    role $R$ in $\T$ such that ${R \roleh{\T} S}$, where $\transatom{B}{R}$ is
    a fresh atomic concept unique for $B$ and $R$.
\end{definition}

This encoding preserves entailment of all facts of the form $C(c)$ and
$U(c,d)$ if $U$ has no transitive subroles: this was proved by
\citeA{Artale09thedl-lite} for $\Bool$, and by
\citeA{DBLP:conf/dlog/Simancik12} for $\SHI$. Example \ref{ex:transitivity},
however, shows that the encoding is incomplete if $U$ has transitive subroles.

\begin{example}\label{ex:transitivity}
Let $\T$ be the TBox below, and let ${\A = \{ A(a) \}}$.
\begin{displaymath}
    A \sqsubseteq \exists S.B \qquad S \sqsubseteq R \qquad S \sqsubseteq R^- \qquad \Tra{R}
\end{displaymath}
Then, ${\Upsilon_\T = \T \setminus \{ \Tra{R} \}}$, and one can easily verify
that ${\T \cup \A \models R(a,a)}$, but ${\Upsilon_\T \cup \A \not\models
R(a,a)}$. Note, however, that the missing inference can be recovered by
extending $\Upsilon_\T$ with the axiom ${A \sqsubseteq \eself{R}}$, which is a
consequence of $\T$.
\end{example}

The intuitions from Example \ref{ex:transitivity} are formalised in
Definition~\ref{def:transitivity}. Roughly speaking, we transform the
transitivity and role inclusion axioms in $\T$ into a datalog program
$\Xi_\T$, which we apply to $\A$ `first'---that is, we compute $\Xi_\T(\A)$
independently from any GCIs. To recoup the remaining consequences of the form
$R(a,a)$, we extend $\Upsilon_\T$ with sufficiently many axioms of the form
${A \sqsubseteq \eself{R}}$ that are entailed by $\T$; this is possible since
we assume that $\T$ is normalised.

\begin{definition}\label{def:transitivity}
    Let $\T$ be a normalised $\SHI$-TBox. Then, $\Omega_\T$ is the TBox
    obtained by extending $\Upsilon_\T$ with an axiom ${A \sqsubseteq
    \eself{R}}$ for each atomic concept $A$ and each atomic role $R$ such that
    $R$ is transitive in $\T$, and ${A \sqsubseteq \exists S.B \in \T}$ for
    some concept $B$ and role $S$ with ${S \roleh{\T} R}$ and ${S \roleh{\T}
    R^-}$. Furthermore, $\Xi_\T$ is the set of datalog rules corresponding to
    the role inclusion and transitivity axioms in $\T$.
\end{definition}

\begin{restatable}{theorem}{transitivity}\label{th:transitivity}
    Let $\T$ be a normalised $\SHI$-TBox, let $\A$ be an ABox, and let
    $\alpha$ be a fact. Then, ${\T \cup \A \models \alpha}$ if and only if
    ${\Omega_\T \cup \Xi_\T(\A) \models \alpha}$.
\end{restatable}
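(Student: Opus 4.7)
My plan is to prove both directions separately; soundness is essentially bookkeeping, while completeness requires a dedicated argument for binary facts whose role has a transitive subrole in $\T$.

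For the \emph{if} direction, I would verify that $\T \models \Omega_\T$ and that every rule in $\Xi_\T$ is a consequence of $\T$. The role-inclusion and transitivity rules making up $\Xi_\T$ and the axioms of $\Theta_\T \subseteq \T$ are immediate. For the three propagation axioms in $\Upsilon_\T$, it suffices to interpret the fresh concept $\transatom{B}{R}$ as the set of elements with some $R$-successor in $A$; all three axioms then hold in any model of $\T$ because $R$ is transitive. For the newly added axioms $A \sqsubseteq \eself{R}$, the hypothesis $A \sqsubseteq \exists S.B \in \T$ with $S \roleh{\T} R$ and $S \roleh{\T} R^-$ forces every $A$-element to have an $S$-successor that is simultaneously an $R$-successor and an $R$-predecessor, and transitivity of $R$ then collapses this into an $R$-self-loop. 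Hence $\Omega_\T \cup \Xi_\T(\A) \models \alpha$ implies $\T \cup \A \models \alpha$.

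For the \emph{only if} direction, I would split on the form of $\alpha$. When $\alpha = C(c)$ or $\alpha = U(c,d)$ with $U$ having no transitive subrole, the claim reduces directly to the preservation results of Artale et al.\ and \v{S}iman\v{c}\'{\i}k noted after Definition~\ref{def:transitivity-standard}, since $\Upsilon_\T \subseteq \Omega_\T$ and $\A \subseteq \Xi_\T(\A)$. The substantive case is $\alpha = U(c,d)$ for $U$ with a transitive subrole. Here the key observation is that, in $\SHI$, an entailed role fact between two ABox constants must factor through a finite chain $c = e_0, \ldots, e_n = d$ of ABox constants connected by atoms whose roles lie $\roleh{\T}$-below $U$: existential axioms introduce anonymous fresh successors that cannot be identified with any ABox constant except possibly the constant they were introduced at. Each link of the chain is therefore either already produced by applying $\Xi_\T$ to $\A$ (via role inclusion and transitivity), or is a self-loop $R(e,e)$ that arises because some derived concept $A(e)$ holds together with an axiom $A \sqsubseteq \exists S.B$ satisfying $S \roleh{\T} R$ and $S \roleh{\T} R^-$. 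The latter case is exactly the trigger for $\Omega_\T$ to contain $A \sqsubseteq \eself{R}$, so $R(e,e)$ is produced and then propagated by $\Xi_\T$.

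The main obstacle is justifying the chain-decomposition claim rigorously, i.e.\ ruling out any further mechanism by which $\SHI$ could entail a role atom on ABox constants. To make this airtight I would construct a canonical tree-shaped model of $\Omega_\T \cup \Xi_\T(\A)$ in which each required existential is realised by an anonymous successor, transitive-role reachability is captured by the $\transatom{B}{R}$ annotations, and self-loops are captured by the $\eself{R}$ axioms; then a routine induction on concept structure shows this model also satisfies $\T \cup \A$. Consequently, any fact entailed by $\T \cup \A$ must already be entailed by $\Omega_\T \cup \Xi_\T(\A)$, completing the argument.
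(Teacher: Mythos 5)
Your ``if'' direction is correct and is essentially the paper's argument: the paper shows that $\Omega_\T$ is a model-conservative extension of $\T$ (with $\transatom{B}{R}$ interpreted exactly as you propose), that ${\T \models A \sqsubseteq \eself{R}}$ for each added self-loop axiom, and that ${\T \models \Xi_\T}$. Your intuition for the hard binary case is also the right one, and it matches the paper's case analysis: a witnessing chain either collapses into links already present in $\Xi_\T(\A)$, or into a self-loop $R(e,e)$ triggered by some ${A \sqsubseteq \exists S.B \in \T}$ with ${S \roleh{\T} R}$ and ${S \roleh{\T} R^-}$, which is precisely when ${A \sqsubseteq \eself{R}}$ is added to $\Omega_\T$. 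The genuine gap is in the step you yourself flag as the main obstacle: the proposed formalisation via ``a canonical tree-shaped model'' cannot work, because $\SHI$ is non-Horn. Already for ${A \sqsubseteq B_1 \sqcup B_2}$ and ${\A = \{A(a)\}}$, every model satisfies $B_1(a)$ or $B_2(a)$ although neither fact is entailed, so no single model of ${\Omega_\T \cup \Xi_\T(\A)}$ satisfies exactly the entailed facts. Consequently, exhibiting one model of ${\Omega_\T \cup \Xi_\T(\A)}$ that also satisfies ${\T \cup \A}$ proves nothing about entailment in the needed direction; your closing inference (``consequently, any fact entailed by $\T \cup \A$ must already be entailed by $\Omega_\T \cup \Xi_\T(\A)$'') is a non sequitur.

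What is needed, and what the paper does, is a \emph{per-fact countermodel transformation} in the contrapositive: fix $\alpha$ with ${\Omega_\T \cup \Xi_\T(\A) \not\models \alpha}$, take a countermodel $I$, and assume without loss of generality that $I$ is tree-shaped and satisfies additional structural properties---in particular, that every role edge between two distinct ABox constants is already in $\Xi_\T(\A)$, and that a doubly-linked anonymous successor of $c$ can only arise from an axiom ${A \sqsubseteq \exists S.B}$ with ${S \roleh{\T} R}$, ${S \roleh{\T} R^-}$, and ${c \in A^I}$. These properties are justified via the hypertableau calculus; note that this is also where your unproved claim that anonymous successors cannot be identified with ABox constants gets its justification---it is simply false in arbitrary models of ${\T \cup \A}$, so the chain decomposition must be routed through these specially constructed countermodels of ${\Omega_\T \cup \Xi_\T(\A)}$. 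One then defines $J$ from $I$ by transitively closing the transitive roles and closing under $\roleh{\T}$; that ${J \models \T \cup \A}$ follows from the standard transitivity-elimination proofs, and the real technical content is showing ${J \not\models \alpha}$, i.e., that the closure does not create the refuted fact---which is exactly where your two-mechanism chain analysis does the work (anonymous detours force $R(c,c)$ via the $\eself{R}$ axioms already in $I$; chains through distinct ABox constants collapse to facts of $\Xi_\T(\A)$, contradicting ${I \not\models \alpha}$ in both cases). Relatedly, your ``routine induction on concept structure'' cannot deliver a model that is both tree-shaped and satisfies the transitivity axioms of $\T$; the closure step is unavoidable, and it is precisely the step that threatens $\neg\alpha$ and demands the argument above.
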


Note that, if $\T$ is normalised, so is $\Omega_\T$. Furthermore, to ensure
decidability, roles involving transitive subroles are not allowed occur in
$\T$ in number restrictions, and so Theorem \ref{th:transitivity} holds even
if $\T$ is a $\mathcal{SHOIQ}$-TBox.

\subsection{From DLs to Disjunctive Datalog}\label{sec:DLtoDD}

Step \textbf{S2} of our rewriting algorithm uses the technique by
\citeA{hms07reasoning} for transforming an $\ALCHI$-TBox $\T$ into a
disjunctive datalog program $\DD(\T)$ such that, for each ABox $\A$, the facts
entailed by ${\T \cup \A}$ and ${\DD(\T) \cup \A}$ coincide. By eliminating
the existential quantifiers in $\T$, one thus reduces a reasoning problem in
${\T \cup \A}$ to a reasoning problem in ${\DD(\T) \cup \A}$. The following
definition summarises the properties of the programs produced by the
transformation.

\begin{definition}\label{def:NMP}
    A disjunctive datalog program $\P$ is \emph{nearly-monadic} if its rules
    can be partitioned into two disjoint sets, $\Pmon$ and $\Prol$, such that
    \begin{enumerate}
        \item\label{cond:mon} each rule ${r \in \Pmon}$ mentions only unary
        and binary predicates and each atom in the head of $r$ is of the form
        $A(z)$ or $R(z,z)$ for some variable $z$, and
    
        \item\label{cond:rol} each rule ${r \in \Prol}$ is of the form
        ${R(x,y) \rightarrow S(x,y)}$ or ${R(x,y) \rightarrow S(y,x)}$.
    \end{enumerate}
    
    A disjunctive rule $r$ is \emph{simple} if there exists a variable $x$
    such that each atom in the body of $r$ is of the form $A_i(x)$,
    $R_i(x,x)$, $S_i(x,y_i)$, or $T_i(y_i,x)$, each atom in the head of $r$ is
    of the form $U_i(x,x)$ or $B_i(x)$, and each variable $y_i$ occurs in $r$
    at most once. Furthermore, a nearly-monadic program $\P$ is \emph{simple}
    if each rule in $\Pmon$ is simple.
\end{definition}

Theorem \ref{th:DLtoDD} follows mainly from the results by
\citeA{hms07reasoning}; we just argue that concepts $\eself{R}$ do not affect
the algorithm, and that $\DD(\T)$ satisfies property 1.

\begin{restatable}{theorem}{DLtoDD}\label{th:DLtoDD}
    For $\T$ a normalised $\ALCHI$-TBox, $\DD(\T)$ satisfies the following:
    \begin{enumerate}
        \item program $\DD(\T)$ is nearly-monadic; furthermore, if $\T$ is a
        $\Bool$-TBox, then $\DD(\T)$ is also simple;
        
        \item ${\T \models \DD(\T)}$; and
    
        \item ${\cert{\Q}{\T}{\A} = \cert{\Q}{\DD(\T)}{\A}}$ for each ABox
        $\A$ and each ground query $\Q$.
    \end{enumerate}
\end{restatable}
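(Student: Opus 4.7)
The plan is to derive properties 2 and 3 essentially from the original result of \citeA{hms07reasoning}, so that the work to be done lies in (a) checking that the admission of $\eself{R}$ concepts does not disrupt their construction, and (b) establishing the structural property~1 for $\DD(\T)$. First I would recall the HMS pipeline: normalise $\T$ (ours already is), translate each axiom into a small set of guarded first-order clauses via a structural transformation, saturate the resulting set under an ordered variant of basic superposition with an appropriate selection function, and finally read off $\DD(\T)$ from the saturated set by discarding the clauses that mention Skolem terms. Properties 2 and 3 then depend only on the fact that the structural clausification is model-conservative with respect to the original signature (so $\T \models \DD(\T)$) and that the resulting program preserves fact entailment over arbitrary ABoxes; these are exactly what \citeA{hms07reasoning} establish.

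To show that $\eself{R}$ does not affect the algorithm, I would observe that the two admissible axiom shapes in a normalised TBox, $\eself{R} \sqsubseteq A$ and $A \sqsubseteq \eself{R}$, clausify to the one-variable clauses $\neg R(x,x) \vee A(x)$ and $\neg A(x) \vee R(x,x)$. Both are guarded, introduce no Skolem terms, and fit the class of clauses manipulated by HMS literally as a degenerate case of axioms of the shape $\exists R.\top \sqsubseteq A$ and $A \sqsubseteq \exists R.\top$ in which the two role positions happen to be unified. Consequently all termination and soundness arguments of HMS go through unchanged: inferences involving such clauses produce clauses of the prescribed class and no new shapes arise.

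For property 1, I would partition $\DD(\T)$ by placing rules of the form $R(x,y) \rightarrow S(x,y)$ or $R(x,y) \rightarrow S(y,x)$ into $\Prol$ and all other rules into $\Pmon$. The core claim is that every rule in $\Pmon$ uses only unary and binary predicates and that each head atom has the form $A(z)$ or $R(z,z)$, which I would prove by induction on the length of the HMS derivation, relying on the fact that the initial clauses are guarded and that the selection function forces resolution to take place on an atom containing the ``central'' variable of each clause. The main obstacle here is the exhaustive case analysis: one must track the shape of conclusions produced by positive factoring on head atoms, binary resolution on body atoms, and basic superposition on Skolem-term subterms, showing in each case that the conclusion again has the prescribed form. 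The essential control comes from guardedness together with the ordering restrictions, which together prevent ``foreign'' variables from reaching a head position.

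For the simple refinement in the $\Bool$ case I would exploit that every axiom $A \sqsubseteq \exists R.C \in \T$ has $C = \top$, so that the clausification of existentials produces clauses of the bare shape $\neg A(x) \vee R(x,f(x))$, with no concept atom attached to the Skolem term. This keeps each saturation-derived rule in $\Pmon$ anchored at a single central variable $x$, with body atoms of the form $A_i(x)$, $R_i(x,x)$, $S_i(x,y_i)$, or $T_i(y_i,x)$, head atoms of the form $U_i(x,x)$ or $B_i(x)$, and each frontier variable $y_i$ appearing only once. I would verify this by the same induction used for property 1, observing that under the $\Bool$ restriction no admissible inference can merge or duplicate frontier variables, so simplicity is preserved along every derivation.
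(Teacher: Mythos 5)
Your overall route coincides with the paper's: properties 2 and 3 are inherited from \citeA{hms07reasoning}, and property 1 (plus simplicity for $\Bool$) is established by a shape analysis of the clauses occurring during saturation. However, there is a genuine gap at exactly the point the paper identifies as the only real work: the integration of $\eself{R}$. You claim that the clauses ${\neg R(x,x) \vee A(x)}$ and ${\neg A(x) \vee R(x,x)}$ fit the HMS clause classes ``literally as a degenerate case'' of ${\exists R.\top \sqsubseteq A}$ and ${A \sqsubseteq \exists R.\top}$, so that ``all termination and soundness arguments go through unchanged.'' This is not so: the clausification of ${A \sqsubseteq \exists R.\top}$ is ${\neg A(x) \vee R(x,f(x))}$, and $R(x,x)$ is not an instance of that shape (indeed $R(x,x)$ and $R(x,f(x))$ do not even unify, by the occurs check), nor of the two-variable patterns $R(x,y)$ in the other clause types. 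Consequently the HMS parameters do not automatically make the right literal eligible in these new clauses: in ${\neg A(x) \vee R(x,x)}$ there is no functional term to force $R(x,x)$ to be maximal, so without further stipulation inferences could take place on $\neg A(x)$, producing clauses outside the taxonomy and invalidating the closure property on which your induction for property 1 rests. The paper's proof handles precisely this: it \emph{extends} the selection function to select $R(x,x)$ in clauses ${\neg R(x,x) \vee A(x)}$, \emph{modifies} the ordering so that $R(x,x)$ is larger than every unary atom $A(x)$, and then verifies the three new inference patterns (a clause ${\neg R(x,x) \vee A(x)}$ resolves with nothing; resolving ${\neg A(x) \vee R(x,x)}$ against the monadic clauses with a $\neg R(x,y)$ body literal yields a purely unary clause; resolving it against a role-inclusion clause yields another clause of the form ${\neg A(x) \vee S(x,x)}$). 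Asserting ``no new shapes arise'' is asserting, not proving, the crux of the theorem.

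Two smaller points. First, for $\ALCHI$ the relevant HMS calculus is ordered resolution with selection (basic superposition is their machinery for number restrictions and equality, which are absent here), and the operative control is the ordering/selection discipline on the explicit clause taxonomy, not guardedness per se. Second, your $\Bool$ argument is directionally right but aims at the wrong clause family: the Skolem-bearing clauses ${\neg A(x) \vee R(x,f(x))}$ are discarded from $\DD(\T)$ anyway, and all saturation-derived function-free monadic clauses have a single variable and are trivially simple. What simplicity actually hinges on is the \emph{initial} two-variable clauses ${\neg A(x) \vee \neg R(x,y) \vee \neg B(y) \vee C(x) \vee D(y)}$: the $\Bool$ restriction forces ${A = \top, C = \bot}$ or ${B = \top, D = \bot}$ in each of them, making them simple, and saturation never creates new clauses of this type --- that observation, which your sketch does not isolate, is the paper's entire argument for the simplicity claim.
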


\begin{table}[t]
\centering
\caption{Example Disjunctive Program $\DD(\TEX)$}\label{tab:example-DD}
\begin{tabular}{cc}
    \hline
    $C_1$    & $\neg \Student(x) \vee \Grad(x) \vee \Undergrad(x)$ \\
    $C_2$    & $\neg \Course(x) \vee \GradCo(x) \vee \UndergradCo(x)$ \\
    \hline
    $C_3$    & $\neg \PHD(x) \vee \Grad(x)$ \\
    \hline
    $C_4$    & $\neg \PHDco(x) \vee \GradCo(x)$ \\
    $C_5$    & $\neg \takes(x,y) \vee \neg \GradCo(y) \vee \Grad(x)$ \\
    $C_6$    & $\neg \Undergrad(x) \vee \neg \takes(x,y) \vee \neg \GradCo(y)$ \\
    \hline
\end{tabular}  
\end{table} 

\begin{example}
When applied to the TBox $\TEX$ in Table \ref{tab:example-TBox}, this
algorithm produces the disjunctive program $\DD(\TEX)$ shown (as clauses) in
Table \ref{tab:example-DD}. In particular, axiom $\gamma_3$ is eliminated
since it contains an existential quantifier, but its effects are compensated
by clause $C_3$. Clauses $C_1$--$C_2$ and $C_4$--$C_6$ are obtained from
axioms $\gamma_1$--$\gamma_2$ and $\gamma_4$--$\gamma_6$, respectively.
\end{example}

\subsection{From Disjunctive Datalog to Datalog} \label{sec:DDtoD}

Step \textbf{S3} of our rewriting algorithm attempts to transform the
disjunctive program obtained in Step \textbf{S2} into a datalog program such
that, for each ABox $\A$, the two programs entail the same facts. This is
achieved using known knowledge compilation techniques, which we survey next.

\subsubsection{Resolution-Based Knowledge Compilation}

In their seminal paper, \citeA{selman1996knowledge} proposed an algorithm for
compiling a set of propositional clauses $\S$ into a set of Horn clauses
$\Shorn$ such that the Horn consequences of $\S$ and $\Shorn$ coincide.
Subsequently, \citeA{DBLP:journals/ai/Val05} generalised this algorithm to the
case when $\S$ contains first-order clauses, but without any termination
guarantees; Procedure~\ref{alg:Hornification} paraphrases this algorithm. The
algorithm applies to $\S$ binary resolution and positive factoring from
resolution theorem proving, and it keeps only the consequences that are not
redundant according to Definition \ref{def:relevant-consequence}. Unlike
standard resolution, the algorithm maintains two sets $\Shorn$ and $\Snhorn$
of Horn and non-Horn clauses, respectively; furthermore, the algorithm never
resolves two Horn clauses.

\begin{algorithm}[t!]
\caption{$\mathsf{Compile}$-$\mathsf{Horn}$}\label{alg:Hornification}
\begin{footnotesize}
    \textbf{Input:}
    $\S$: set of clauses \\
    \textbf{Output:}
    $\S_{H}$: set of Horn clauses
    \begin{algorithmic}[1]
        \State $\Shorn := \{ C \in \S \mid C \text{ is a Horn clause and not a tautology}\}$
        \State $\Snhorn := \{ C \in \S \mid C \text{ is a non-Horn clause and not a tautology}\}$
        \Repeat
            \State Compute all relevant consequences of $\tuple{\Shorn,\Snhorn}$
            \For{\textbf{each} relevant consequence $C$ of $\tuple{\Shorn,\Snhorn}$}
                \State Delete from $\Shorn$ and $\Snhorn$ all clauses $\theta$-subsumed by $C$
                \State \textbf{if} $C$ is Horn \textbf{then} $\Shorn := \Shorn \cup \{ C \}$
                \State \textbf{else} $\Snhorn := \Snhorn \cup \{ C \}$
            \EndFor
        \Until{there is no relevant consequence of $\tuple{\Shorn,\Snhorn}$}
        \State \Return $\Shorn$
    \end{algorithmic}
\end{footnotesize}
\end{algorithm}

\begin{definition}\label{def:relevant-consequence}
    Let $\Shorn$ and $\Snhorn$ be sets of Horn and non-Horn clauses,
    respectively. A clause $C$ is a \emph{relevant consequence} of
    $\tuple{\Shorn,\Snhorn}$ if
    \begin{itemize}
        \item $C$ is not redundant in ${\Shorn \cup \Snhorn}$, and
    
        \item $C$ is a factor of a clause ${C_1 \in \Snhorn}$, or a resolvent
        of clauses ${C_1 \in \Snhorn}$ and ${C_2 \in \Snhorn \cup \Shorn}$.
    \end{itemize}
\end{definition} 

Theorem~\ref{th:Del-Val} recapitulates the algorithm's properties. It
essentially shows that, even if the algorithm never terminates, each Horn
consequence of $\S$ will at some point during algorithm's execution become
entailed by the set of Horn clauses $\Shorn$ computed by the algorithm. The
theorem was proved by showing that each resolution proof of a consequence of
$\S$ can be transformed to `postpone' all resolution steps between two Horn
clauses until the end; thus, one can `precompute' set $\Shorn$ of all
consequences of $\S$ derivable using a non-Horn clause.

\begin{restatable}{theorem}{DelVal}(\cite{DBLP:journals/ai/Val05})\label{th:Del-Val}
    Let $\S$ be a set of clauses, and let $C$ be a Horn clause such that ${\S
    \models C}$, and assume that Procedure \ref{alg:Hornification} is applied
    to $\S$. Then, after some finite number of iterations of the loop in lines
    3--9, we have ${\Shorn \models C}$.
\end{restatable}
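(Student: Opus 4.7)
The plan is to prove the theorem by a proof-transformation argument on resolution derivations. Since $\S \models C$ and $C$ is Horn, completeness of binary resolution with positive factoring yields a derivation $\Pi$ from $\S$ of some clause $C^\ast$ that $\theta$-subsumes $C$. The aim is to show that after finitely many iterations of Procedure \ref{alg:Hornification}, the set $\Shorn$ entails $C$ in the ordinary sense.

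The core lemma I would establish is that $\Pi$ can be rewritten to a derivation $\Pi'$ of the same conclusion in which every inference step using at least one non-Horn premise (call it \emph{mixed}) precedes every step in which both premises are Horn (call it \emph{Horn-internal}). The proof is by a local swap: whenever $\Pi$ contains a Horn-internal step with conclusion $H$ immediately used in a mixed step together with a non-Horn clause $N$, the two inferences can be replaced by first resolving $N$ with one of the two Horn premises of $H$ (a mixed step producing a non-Horn intermediate) and then resolving this intermediate with the other Horn premise (again a mixed step). Iterating such local reorderings under a suitable well-founded measure produces $\Pi'$ in which all mixed steps precede all Horn-internal ones. Factoring causes no difficulty because positive factoring on a Horn clause is vacuous, so factoring steps lie entirely within the mixed phase.

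Once $\Pi'$ has this two-phase form, its mixed prefix consists exactly of inferences of the kind identified by Definition \ref{def:relevant-consequence}: factors of non-Horn clauses and resolvents where at least one premise is non-Horn. Fairness of Procedure \ref{alg:Hornification} — every relevant consequence is computed in each iteration — combined with the standard saturation argument for redundancy (a $\theta$-subsumed clause can always be replaced by its subsumer in any derivation, so deletion preserves all proofs up to subsumption) ensures that after enough iterations every Horn clause appearing as a conclusion in the mixed phase is $\theta$-subsumed by some clause in $\Shorn$. The Horn-internal suffix of $\Pi'$ then becomes a standard Horn resolution derivation of $C^\ast$ from $\Shorn$, so $\Shorn \models C^\ast$ and hence $\Shorn \models C$. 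The main obstacle is the bookkeeping in the reordering lemma: one must pick a measure on derivations that strictly decreases under each swap, handle the case where $H$ is reused in several later mixed steps (requiring duplication of subderivations), and verify that the interaction between this resolution reordering and the algorithm's eager $\theta$-subsumption-based deletion never removes a clause needed by $\Pi'$ before it is used in the Horn-internal suffix.
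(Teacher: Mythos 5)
Your proof-transformation argument---postponing all Horn--Horn resolution steps until the end so that the mixed prefix consists exactly of the relevant consequences of Definition \ref{def:relevant-consequence}, and then reading the Horn-internal suffix as a derivation from $\Shorn$---is precisely the strategy the paper itself describes for this theorem, which it does not prove but cites from del Val (2005). One minor slip worth noting: in your swap, the intermediate resolvent of the non-Horn clause $N$ with one Horn premise need not be non-Horn, so the second step may be Horn-internal rather than mixed; this is harmless, since the swap still moves the mixed inference earlier and a standard inversion-counting measure still decreases.
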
 

\subsubsection{ABox-Independent Compilation}

Compiling knowledge into Horn clauses and computing datalog rewritings are
similar in spirit: both transform one theory into another while ensuring that
the two theories are indistinguishable w.r.t.\ a certain class of queries.
There is, however, an important difference: given a disjunctive program $\P$
and a fixed ABox $\A$, one could apply Procedure \ref{alg:Hornification} to
${\S = \P \cup \A}$ to obtain a datalog program $\Shorn$, but such $\Shorn$
would not necessarily be independent from the specific ABox $\A$. In contrast,
a rewriting of $\P$ is a datalog program $\Phorn$ that can be freely combined
with an arbitrary ABox $\A$. We next show that a program $\Phorn$ satisfying
the latter requirement can be obtained by applying Procedure
\ref{alg:Hornification} to $\P$ only.

Towards this goal, we generalise Theorem \ref{th:Del-Val} and show that, when
applied to an arbitrary set of first-order clauses $\N$, Procedure
\ref{alg:Hornification} computes a set of Horn clauses $\Nhorn$ such that the
Horn consequences of ${\N \cup \A}$ and ${\Nhorn \cup \A}$ coincide for an
arbitrary ABox $\A$. Intuitively, this shows that, when Procedure
\ref{alg:Hornification} is applied to ${\S = \N \cup \A}$, all inferences
involving facts in $\A$ can be `moved' to end of derivations.

\begin{restatable}{theorem}{postponeABox}\label{th:postpone-ABox}
    Let $\N$ be a set of clauses, let $\A$ be an ABox, let $C$ be a Horn
    clause such that ${\N \cup \A \models C}$, and assume that Procedure
    \ref{alg:Hornification} is applied to $\N$. Then, after some finite number
    of iterations of the loop in lines 3--9, we have ${\Nhorn \cup \A \models
    C}$.
\end{restatable}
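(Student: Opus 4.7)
The plan is to reduce Theorem \ref{th:postpone-ABox} to Theorem \ref{th:Del-Val} via a simulation argument that exploits two facts: $\A$ consists of Horn clauses (facts), and Procedure \ref{alg:Hornification} never resolves two Horn clauses. First, apply Theorem \ref{th:Del-Val} to $\S = \N \cup \A$: since $\N \cup \A \models C$ and $C$ is Horn, running the procedure on $\S$ produces, after finitely many iterations, a Horn set that entails $C$. Initially, $\A$ sits entirely in the Horn component $\Shorn$; moreover, because the procedure does not resolve two Horn clauses, every use of an ABox fact during the execution on $\S$ is against a non-Horn clause in $\Snhorn$.

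Next, I would compare the execution on $\S$ with the execution on $\N$ alone. By induction on the iteration count $k$, I would establish the invariant that every clause $D$ produced on $\S$ by iteration $k$ is derivable from some clause $D^\ast$ produced on $\N$ by a finite sequence of factorings and resolutions against facts of $\A$, where these additional steps correspond to instantiations of free variables of $D^\ast$ by ABox constants. The inductive step analyses the two shapes of a relevant consequence. A factor of a non-Horn clause $C_1$ on $\S$ is mirrored by factoring its counterpart $C_1^\ast$ on $\N$. A resolvent of $C_1 \in \Snhorn$ with $C_2 \in \Snhorn \cup \Shorn$ is mirrored either by the analogous resolution on $\N$ (when $C_2$ originates from $\N$) or by appending one further $\A$-resolution step to the derivation of $C_1^\ast$ (when $C_2 \in \A$).

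To close the argument, I would lift the Horn consequence $C$ of $\S$ to a parameterised Horn consequence $\hat{C}$ of $\N$ obtained by replacing each ABox constant $a_i$ used in the proof by a fresh variable $x_i$ and prepending the negative literal $\neg P_i(x_i)$ for each relevant ABox fact $P_i(a_i)$; because $\N$ does not see the ABox signature in the intended application, this lifting is sound. Applying Theorem \ref{th:Del-Val} internally to the $\N$-execution yields $\Nhorn \models \hat{C}$ after finitely many iterations, and resolving $\hat{C}$ against the relevant facts in $\A$ then gives $\Nhorn \cup \A \models C$.

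The main obstacle is calibrating the invariant to handle Horn clauses that arise on $\S$ by factoring a non-Horn clause on literals that unify only after being grounded by ABox constants. The observation that closes this gap is that unification over variables is at least as general as matching against ABox constants, so any such ground factoring lifts to a parameterised factoring performable in the $\N$-execution. A further subtlety is that $\theta$-subsumption can delete clauses asymmetrically across the two executions, and constants shared between $\N$ and $\A$ obstruct the naive lifting of $C$ to $\hat{C}$; accordingly, the invariant must be phrased as derivability ``up to some later iteration $k'$'' rather than as strict lockstep correspondence, and the lifting argument must be restricted to ABox constants that are fresh to $\N$.
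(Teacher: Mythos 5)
Your overall instinct---run the procedure on ${\S = \N \cup \A}$, invoke Theorem \ref{th:Del-Val} there, and transfer the result back to the run on $\N$---matches the paper's, but your transfer mechanism has two genuine gaps. First, the lifting of $C$ to $\hat{C}$ by replacing ABox constants with fresh variables requires that those constants not occur in $\N$; you acknowledge this and restrict to ABox constants ``fresh to $\N$'', but the theorem grants no such assumption: $\N$ and $\A$ may share constants (the paper's own proof explicitly handles the overlap, distinguishing ${C \in \N}$ from ${C \in \A \setminus \N}$ in its base case). The restriction is also unnecessary, because no variable lifting is needed at all: it suffices to keep the ABox hypotheses \emph{ground}. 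The paper associates with each clause $C$ in the run on $\S$ a set ${F_C \subseteq \A}$ of facts used in its derivation and proves, by induction on the inferences of that single run, the purely semantic invariant ${\N \models \neg F_C \vee C}$ (resolution takes unions of the $F$-sets, factoring preserves them). Since $\neg F_D$ contributes only negative ground literals, ${\neg F_D \vee D}$ is Horn whenever $D$ is, so Theorem \ref{th:Del-Val} applied to the run on $\N$ directly yields ${\Nhorn \models \neg F_D \vee D}$ after finitely many iterations, hence ${\Nhorn \cup \A \models D}$; doing this for each $D$ in the finite set $\Shorn$ satisfying ${\Shorn \models C}$ and using monotonicity of the procedure finishes the proof, with no simulation and no freshness hypothesis.

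Second, and more seriously, your clause-by-clause simulation between the two runs (``every clause produced on $\S$ is derivable from some clause produced on $\N$ by appending $\A$-resolutions'') is not established, and it is exactly the hard part. Because the procedure deletes clauses by $\theta$-subsumption, the two runs diverge structurally: a clause deleted in one run need not have its counterpart deleted, or even ever derived, in the other, and your proposed fix---derivability ``up to some later iteration $k'$''---is precisely a completeness-despite-redundancy-deletion claim, i.e., the content of Theorem \ref{th:Del-Val} itself, which you would be re-proving rather than using. The paper sidesteps this entirely by exploiting that Theorem \ref{th:Del-Val} is stated in terms of \emph{entailment}, not derivation: one never needs ${\neg F_D \vee D}$ to be syntactically produced in the run on $\N$, only to be semantically entailed by $\N$, whereupon Del Val's theorem guarantees that $\Nhorn$ eventually entails it. Relatedly, Theorem \ref{th:Del-Val} gives ${\Shorn \models C}$, not ${C \in \Shorn}$, so there is no single derivation of $C$ in the run from which to read off ``the ABox constants used in the proof''; you must treat each Horn clause of $\Shorn$ separately, as above.
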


\subsubsection{Rewriting Nearly-Monadic Disjunctive Programs}

The final obstacle to obtaining a datalog rewriting of a $\SHI$-TBox $\T$ is
due to Theorem \ref{th:transitivity}: the rules in $\Xi_\T$ should be applied
`before' $\Omega_\T$. While this allows us to transform $\Omega_\T$ into ${\P
= \DD(\Omega_\T)}$ and $\Phorn$ without taking $\Xi_\T$ into account, this
also means that Theorems \ref{th:transitivity}, \ref{th:DLtoDD}, and
\ref{th:postpone-ABox} only imply that the facts entailed by ${\T \cup \A}$
and ${\Phorn \cup \Xi_\T(\A)}$ coincide. To obtain a `true' rewriting, we show
in Lemma \ref{lem:compile-NMP} that program $\Phorn$ is nearly-monadic. We use
this observation in Theorem \ref{th:SHI-rewriting} to show that each binary
fact obtained by applying $\Phorn$ to $\Xi_\T(\A)$ is of the form $R(c,c)$,
and so it cannot `fire' the rules in $\Xi_\T$; hence, ${\Phorn \cup \Xi_\T}$
is a rewriting of $\T$.

\begin{restatable}{lemma}{compileNMP}\label{lem:compile-NMP} Let $\P$ be a
nearly-monadic program, and assume that Procedure \ref{alg:Hornification}
terminates when applied to $\P$ and returns $\Phorn$. Then, $\Phorn$ is a
nearly-monadic datalog program. \end{restatable}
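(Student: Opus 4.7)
The strategy is to prove, by induction on the derivation length, that every clause computed by Procedure \ref{alg:Hornification} when applied to $\P$ (whether placed in $\Shorn$ or $\Snhorn$) falls into one of two shape classes: (a) all its predicates are unary or binary and each positive literal is of the form $A(z)$ or $R(z,z)$ for some variable $z$, matching condition \ref{cond:mon} of Definition \ref{def:NMP}; or (b) it is $R(x,y) \rightarrow S(x,y)$ or $R(x,y) \rightarrow S(y,x)$, matching condition \ref{cond:rol}. Once this is established, since $\Phorn$ consists only of Horn clauses and inherits function-freeness from $\P$, the nearly-monadic partition of $\Phorn$ will simply place its shape-(b) members in one block and its shape-(a) members (automatically with at most one head atom since they are Horn) in the other. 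A useful preliminary observation is that $\P$ contains no constants, so every MGU arising in the derivation substitutes variables for variables and thus cannot break the $A(z)$ or $R(z,z)$ shape by instantiating $z$ with a constant.

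The base case is immediate from $\P = \Pmon \cup \Prol$. For the inductive step I note that Definition \ref{def:relevant-consequence} factors only non-Horn clauses, which by hypothesis are in shape (a); factoring two head atoms of the form $A(z)$ or $R(z,z)$ uses a variable-to-variable MGU and keeps the result in shape (a). For binary resolution I exploit that the procedure never resolves two Horn clauses and that shape-(b) clauses are Horn, so at least one resolution participant is in shape (a). When both participants are in shape (a), the MGU unifying a head atom $A(z)$ or $R(z,z)$ of one clause with a body atom of the other again maps variables to variables; the surviving head atoms on both sides retain the shape $A(z')$ or $R(z',z')$, all predicates stay unary or binary, and hence the resolvent is in shape (a).

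The main obstacle I anticipate is binary resolution between a shape-(a) clause and a shape-(b) clause. If the resolved literal is the shape-(b) head $S(x,y)$ (or $S(y,x)$) together with a matching body atom $S(u,v)$ of the shape-(a) clause, the MGU $\{ x \mapsto u, y \mapsto v \}$ merely swaps $S(u,v)$ for the shape-(b) body, yielding $R(u,v)$ (or $R(v,u)$) in the body while leaving the shape-(a) head intact; the degenerate sub-case $u=v$ only produces $R(u,u)$, still unary or binary. The crucial sub-case is resolving a binary head atom $R(z,z)$ of the shape-(a) clause against the shape-(b) body $R(x,y)$: the only MGU $\{ x \mapsto z, y \mapsto z \}$ collapses the shape-(b) head $S(x,y)$ or $S(y,x)$ into $S(z,z)$, which is still a legitimate shape-(a) head atom. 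Combined with the remark that subsumption deletions can only remove clauses and therefore cannot violate any shape condition, the induction then closes, and the partition of $\Phorn$ described above witnesses the required nearly-monadic structure.
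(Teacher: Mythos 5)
Your proof is correct and follows essentially the same route as the paper's: an induction over the inference applications of Procedure~\ref{alg:Hornification} maintaining the invariant that every derived clause falls into one of the two shape classes of Definition~\ref{def:NMP}, with the same two key observations---positive factoring is never applicable to clauses of type~\ref{cond:rol} and preserves type~\ref{cond:mon}, and since type-\ref{cond:rol} clauses are Horn, every resolution step involves at least one clause of type~\ref{cond:mon} and yields a clause of type~\ref{cond:mon}. Your explicit case analysis of resolution between the two shapes (including the collapse of the type-\ref{cond:rol} head to $S(z,z)$ and the variable-to-variable MGU remark) merely spells out details the paper's proof leaves implicit.
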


\begin{restatable}{theorem}{SHIrewriting}\label{th:SHI-rewriting}
    Let ${\P = \DD(\Omega_\T)}$ for $\T$ an $\SHI$-TBox. If, when applied to
    $\P$, Procedure \ref{alg:Hornification} terminates and returns $\Phorn$,
    then ${\Phorn \cup \Xi_\T}$ is a rewriting of $\T$.
\end{restatable}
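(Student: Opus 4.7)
The plan is to establish, for every ABox $\A$ and every ground fact $\alpha$, that ${\T \cup \A \models \alpha}$ iff ${\Phorn \cup \Xi_\T \cup \A \models \alpha}$; this suffices for the rewriting property since the answers to a ground conjunctive query are determined entirely by the set of entailed ground facts. First I would chain together the three preservation results already in hand: Theorem~\ref{th:transitivity} reduces ${\T \cup \A}$ to ${\Omega_\T \cup \Xi_\T(\A)}$; Theorem~\ref{th:DLtoDD} further reduces this to ${\P \cup \Xi_\T(\A)}$ with ${\P = \DD(\Omega_\T)}$; and Theorem~\ref{th:postpone-ABox}, applied to the clause set $\P$ with the ABox $\Xi_\T(\A)$, exploits termination of Procedure~\ref{alg:Hornification} to reduce once more to ${\Phorn \cup \Xi_\T(\A)}$. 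Composing these equivalences gives ${\T \cup \A \models \alpha}$ iff ${\Phorn \cup \Xi_\T(\A) \models \alpha}$ for every ground fact $\alpha$.

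It then remains to bridge ${\Phorn \cup \Xi_\T(\A)}$ and ${\Phorn \cup \Xi_\T \cup \A}$. One direction is immediate, since ${\Xi_\T \cup \A \models \Xi_\T(\A)}$. For the converse, the key tool is Lemma~\ref{lem:compile-NMP}: $\Phorn$ is nearly-monadic, so it partitions into rules whose heads have the form $A(z)$ or $R(z,z)$, together with rules of the form ${R(x,y) \rightarrow S(x,y)}$ or ${R(x,y) \rightarrow S(y,x)}$. Because $\Omega_\T$ carries exactly the same role inclusion axioms as $\T$, every rule of the second kind is a Horn consequence of the RIAs of $\T$, and its effect is therefore already captured by $\Xi_\T$. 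Consequently, the only binary atoms $\Phorn$ can contribute beyond those present in $\Xi_\T(\A)$ are diagonal atoms of the form $R(c,c)$.

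The main obstacle, and the only delicate point, is to show that these diagonal atoms cannot cause $\Xi_\T$ to derive further non-trivial binary facts that in turn unlock new $\Phorn$ derivations. I would handle this with a derivation-permutation argument in the spirit of the proof of Theorem~\ref{th:postpone-ABox}: any derivation of a fact from ${\Phorn \cup \Xi_\T \cup \A}$ can be reorganised so that all $\Xi_\T$ inferences precede every $\Phorn$ inference, producing $\Xi_\T(\A)$ as an intermediate database before $\Phorn$ takes over. The three permutation cases are tractable: a $\Xi_\T$ role-inclusion step applied to a diagonal atom $R(c,c)$ yields $S(c,c)$, which is equally derivable by the matching rule of $\Phorn$ and can hence be postponed; a transitivity step applied to $R(c,c)$ with itself yields $R(c,c)$ again and is eliminable; and a $\Xi_\T$ step applied to a non-diagonal atom first produced by $\Phorn$ can be commuted with that $\Phorn$ step, since the two propagate along the same role hierarchy. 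Combined with the chain of equivalences from the first paragraph, this establishes the rewriting property.
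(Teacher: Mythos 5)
Your proposal is correct and follows essentially the same route as the paper's proof: the identical chain of Theorems~\ref{th:transitivity}, \ref{th:DLtoDD}, and \ref{th:postpone-ABox} reduces everything to ${\Phorn \cup \Xi_\T(\A)}$, and the bridge to ${\Phorn \cup \Xi_\T \cup \A}$ rests on exactly the paper's key points---Lemma~\ref{lem:compile-NMP}, the two-way correspondence between $\Prol[\Phorn]$ and the role-inclusion rules of $\Xi_\T$, and the observation that all new binary facts are diagonal atoms $R(c,c)$ that cannot fire $\Xi_\T$. The only cosmetic difference is that the paper packages your derivation-permutation step as the fixpoint identity ${\Xi_\T(\Phorn(\Xi_\T(\A))) = \Phorn(\Xi_\T(\A))}$, which is the same argument in closure form.
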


Please note that our algorithm (just like all rewriting algorithms we are
aware of) computes rewritings using a sound inference system and thus always
produces strong rewritings.

\begin{example}\label{ex:running-example}
When applied to the program ${\P = \DD(\TEX)}$ from Table
\ref{tab:example-DD}, Procedure \ref{alg:Hornification} resolves $C_2$ and
$C_5$ to derive \eqref{eq:clause-derived-1}, $C_2$ and $C_6$ to derive
\eqref{eq:clause-derived-2}, and $C_1$ and $C_6$ to derive
\eqref{eq:clause-derived-3}.
\begingroup\small
\begin{align}
    \label{eq:clause-derived-1} \neg \takes(x,y) \vee \neg \Course(y) \vee \Grad(x) \vee \UndergradCo(y) \\
    \label{eq:clause-derived-2} \neg \takes(x,y) \vee \neg \Undergrad(x) \vee \neg \Course(y) \vee \UndergradCo(y) \\
    \label{eq:clause-derived-3} \neg \takes(x,y) \vee \neg \Student(x) \vee \neg \GradCo(y) \vee \Grad(x)
\end{align}
\endgroup
Resolving \eqref{eq:clause-derived-2} and $C_1$, and
\eqref{eq:clause-derived-3} and $C_2$ produces redundant clauses, after which
the procedure terminates and returns the set $\Phorn$ consisting of clauses
$C_3$--$C_6$, \eqref{eq:clause-derived-2}, and \eqref{eq:clause-derived-3}. By
Theorem \ref{th:SHI-rewriting}, $\Phorn$ is a strong rewriting of $\TEX$.
\end{example}

\subsection{Termination}\label{sec:Termination}

Procedure \ref{alg:Hornification} is not a semi-decision procedure for either
strong non-rewritability (cf.\ Example \ref{ex:non-termination-1}) or strong
rewritability (cf.\ Example \ref{ex:non-termination-2}) of nearly-monadic
programs.

\begin{example}\label{ex:non-termination-1}  Let $\P$ be defined
as follows.
\begin{align}
    G(x) \vee B(x)                                  & \label{eq:non-term-1} \\
    B(x_1) \vee \neg E(x_1,x_0) \vee \neg G(x_0)    & \label{eq:non-term-2} \\
    G(x_1) \vee \neg E(x_1,x_0) \vee \neg B(x_0)    & \label{eq:non-term-3}
\end{align}
Clauses \eqref{eq:non-term-2} and \eqref{eq:non-term-3} are mutually
recursive, but they are also Horn, so Procedure \ref{alg:Hornification} never
resolves them directly.

Clauses \eqref{eq:non-term-2} and \eqref{eq:non-term-3}, however, can interact
through clause \eqref{eq:non-term-1}. Resolving \eqref{eq:non-term-1} and
\eqref{eq:non-term-2} on ${\neg G(x_0)}$ produces \eqref{eq:non-term-4}; and
resolving \eqref{eq:non-term-3} and \eqref{eq:non-term-4} on $B(x_1)$ produces
\eqref{eq:non-term-5}. By further resolving \eqref{eq:non-term-5}
alternatively with \eqref{eq:non-term-2} and \eqref{eq:non-term-3}, we obtain
\eqref{eq:non-term-6} for each even $n$. By resolving \eqref{eq:non-term-3}
and \eqref{eq:non-term-6} on $B(x_0)$, we obtain \eqref{eq:non-term-7}.
Finally, by factoring \eqref{eq:non-term-7}, we obtain \eqref{eq:non-term-8}
for each even $n$.
\begin{align}
    B(x_1) \vee \neg E(x_1,x_0) \vee B(x_0)                                                 & \label{eq:non-term-4} \\
    G(x_2) \vee \neg E(x_2,x_1) \vee \neg E(x_1,x_0) \vee B(x_0)                            & \label{eq:non-term-5} \\
    G(x_n) \vee [ \bigvee_{i=1}^n \neg E(x_i,x_{i-1}) ] \vee B(x_0)                         & \label{eq:non-term-6} \\
    G(x_n) \vee [ \bigvee_{i=1}^n \neg E(x_i,x_{i-1}) ] \vee G(x_1') \vee \neg E(x_1',x_0)  & \label{eq:non-term-7} \\
    G(x_n) \vee \neg E(x_n,x_0) \vee [ \bigvee_{i=1}^n \neg E(x_i,x_{i-1}) ]                & \label{eq:non-term-8}
\end{align}
Procedure \ref{alg:Hornification} thus derives on $\P$ an infinite set of Horn
clauses, and Theorem \ref{th:negative-model-preserving} shows that no strong
rewriting of $\P$ exists.
\end{example}

\begin{example}\label{ex:non-termination-2}
Let $\P$ be defined as follows.
\begin{align}
    B_1(x_0) \vee B_2(x_0) \vee \neg A(x_0)         & \label{eq:non-term-ex2-1} \\
    A(x_1) \vee \neg E(x_1,x_0) \vee \neg B_1(x_0)  & \label{eq:non-term-ex2-2} \\
    A(x_1) \vee \neg E(x_1,x_0) \vee \neg B_2(x_0)  & \label{eq:non-term-ex2-3}
\end{align}
When applied to $\P$, Procedure \ref{alg:Hornification} will eventually
compute infinitely many clauses $C_n$ of the following form:
\begin{equation*}
    C_n = A(x_n) \vee [ \bigvee_{i=1}^n \neg E(x_i,x_{i-1})] \vee \neg A(x_0)
\end{equation*}
However, for each ${n > 1}$, clause $C_n$ is a logical consequence of clause
$C_1$, so the program consisting of clauses \eqref{eq:non-term-ex2-1},
\eqref{eq:non-term-ex2-2}, and $C_1$ is a strong rewriting of $\P$.
\end{example}

Example \ref{ex:condensation} demonstrates another problem that can arise even
if $\P$ is nearly-monadic and simple.

\begin{example}\label{ex:condensation}
Let $\P$ be the following program:
\begin{align}
    \neg R(x,y) \vee A(x)                           & \label{eq:condensation-1} \\
    \neg R(x,y) \vee B(x)                           & \label{eq:condensation-2} \\
    \neg A(x) \vee \neg B(x) \vee C(x) \vee D(x)    & \label{eq:condensation-3}
\end{align}
Now resolving \eqref{eq:condensation-1} and \eqref{eq:condensation-3} produces
\eqref{eq:condensation-4}; and resolving \eqref{eq:condensation-2} and
\eqref{eq:condensation-4} produces \eqref{eq:condensation-5}.
\begin{align}
    \neg R(x,y) \vee \neg B(x) \vee C(x) \vee D(x)          & \label{eq:condensation-4} \\
    \neg R(x,y_1) \vee \neg R(x,y_2) \vee C(x) \vee D(x)    & \label{eq:condensation-5}
\end{align}
Clause \eqref{eq:condensation-5} contains more variables than clauses
\eqref{eq:condensation-1} and \eqref{eq:condensation-2}, which makes bounding
the clause size difficult.
\end{example}

Notwithstanding Example \ref{ex:condensation}, we believe one can prove that
Procedure \ref{alg:Hornification} terminates if $\P$ is nearly-monadic and
simple. However, apart from making the termination proof more involved,
deriving clauses such as \eqref{eq:condensation-5} is clearly inefficient. We
therefore extend Procedure \ref{alg:Hornification} with the condensation
simplification rule, which eliminates redundant literals in clauses such as
\eqref{eq:condensation-5}. A \emph{condensation} of a clause $C$ is a clause
$D$ with the least number of literals such that ${D \subseteq C}$ and $C$
subsumes $D$. A condensation of $C$ is unique up to variable renaming, so we
usually speak of \emph{the} condensation of $C$. We next show that Theorems
\ref{th:Del-Val} and \ref{th:postpone-ABox} hold even with condensation.

\begin{restatable}{lemma}{procwithcondensation}\label{lem:proc-with-condensation}
    Theorems \ref{th:Del-Val} and \ref{th:postpone-ABox} hold if
    Procedure~\ref{alg:Hornification} is modified so that, after line 5, $C$
    is replaced with its condensation.
\end{restatable}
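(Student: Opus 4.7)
The plan is to show that condensation, viewed as a replacement operation on derived clauses, preserves logical equivalence while producing clauses that $\theta$-subsume the originals, and that inferences can then be ``lifted'' through condensation without losing any Horn consequences. The definition of condensation gives, for any clause $C$ with condensation $D$, some substitution $\sigma$ with $C\sigma \subseteq D$ together with $D \subseteq C$. Hence $D$ is logically equivalent to $C$ (one direction by weakening, the other via $C \models C\sigma = D$), and $D$ $\theta$-subsumes $C$ using the identity substitution. The overall goal is an invariant saying that the set of clauses computed by the modified procedure is at all times at least as strong as that computed by the original procedure, in the pointwise $\theta$-subsumption order.

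The technical heart is a lifting lemma: if $R$ is obtained in a single BR or PF step with premise $C$ (and possibly another premise $C'$), then performing the same rule with $D$ in place of $C$ yields a clause $R'$ that $\theta$-subsumes $R$, or else $R'$ is a tautology. For BR, every literal $L$ of $C$ maps via $\sigma$ to a literal $L\sigma \in D$, so the literal resolved upon in $C$ has a corresponding literal in $D$; a unifier of $L$ and the opposite literal of $C'$ composes with $\sigma$ to give a unifier of $L\sigma$ and that literal, so the MGU used with $D$ is at least as general, and the resulting resolvent $\theta$-subsumes the original. The argument for PF is similar; indeed, since condensation is itself a maximally general form of factoring, composing condensation with a later PF step can only shorten clauses further.

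Using this lifting lemma, I would prove by induction on the iterations of the loop in lines 3--9 that, if the original Procedure~\ref{alg:Hornification} has computed sets $\Shorn$ and $\Snhorn$, then the modified procedure has computed sets $\Shorn'$ and $\Snhorn'$ such that every clause in $\Shorn \cup \Snhorn$ is $\theta$-subsumed by some clause in $\Shorn' \cup \Snhorn'$. Since $\theta$-subsumption implies logical entailment, this invariant suffices: the original procedure's $\Shorn$ eventually entails any fixed Horn consequence $C$ by Theorem~\ref{th:Del-Val}, so the modified $\Shorn'$ does as well; and the same inductive argument relativised to an ABox $\A$ (lifting inferences from $\N \cup \A$) yields the analogue of Theorem~\ref{th:postpone-ABox}.

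The main obstacle is the interaction between condensation and the \emph{relevance} check in Definition~\ref{def:relevant-consequence}: a condensed clause may turn out to be $\theta$-subsumed by an already stored clause, causing the modified procedure to discard it even when the original procedure would have retained the uncondensed version. This has to be shown harmless, which it is because the subsuming stored clause is itself already at least as strong as the candidate's condensation, and so the inductive invariant is preserved. A related minor subtlety is that condensation may convert a non-Horn clause into a Horn clause by unifying two positive literals; this only moves clauses from $\Snhorn'$ into $\Shorn'$ and can only strengthen the set relevant to both theorems' conclusions.
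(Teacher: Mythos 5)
Your overall plan (simulate the modified run inside the original one via a $\theta$-subsumption invariant) can be made to work, but the technical heart as you state it is false, and fixing it is exactly where the work lies. You claim that performing the same BR step with the condensation $D$ in place of $C$ yields a resolvent $R'$ that $\theta$-subsumes $R$ ``or else $R'$ is a tautology'', arguing that a unifier of the resolved-upon literal $L\in C$ composes with the condensing substitution $\sigma$ to give a unifier of $L\sigma$. This composition argument is wrong: $\sigma$ may identify variables of $L$, and then $L\sigma$ need not be unifiable with the partner literal at all. Concretely, let ${C = P(x,y) \vee P(x,x)}$, whose condensation is ${D = P(x,x)}$, and let ${C' = \neg P(a,b) \vee E(a)}$ with $a\neq b$ constants. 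Resolving $C$ and $C'$ on $P(x,y)$ yields ${R = P(a,a) \vee E(a)}$, while $D$ admits \emph{no} resolution step with $C'$ whatsoever---so no $R'$ exists, tautological or otherwise. The correct disjunction in the standard lifting lemma for subsumption is ``some resolvent of $D$ and $C'$ subsumes $R$, \emph{or $D$ itself subsumes $R$}'' (here ${D\{x \mapsto a\} = P(a,a) \subseteq R}$), and your induction must be restated accordingly. Two further gaps: the MGU used with $D$ may keep literals distinct that merge in $R$, so $R'$ can subsume $R$ without satisfying the literal-count condition of $\theta$-subsumption, which your pointwise invariant requires; and a condensed non-Horn clause can subsume a Horn clause (e.g., ${P(x) \vee P(y) \vee \neg R(x,y)}$ subsumes ${P(a) \vee \neg R(a,a)}$), so from ``every clause of $\Shorn$ is subsumed by some clause of ${\Shorn' \cup \Snhorn'}$'' you cannot directly conclude ${\Shorn' \models C}$ from ${\Shorn \models C}$, which is what Theorem \ref{th:Del-Val} needs---the subsumer may live on the wrong side of the Horn/non-Horn partition.

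You should also know that the paper avoids this entire simulation argument. Its proof is two observations: since $C$ subsumes its condensation $D$, we have ${\{C\} \models D}$, so adding $D$ is sound; and since ${D \subseteq C}$ with no more literals, $D$ $\theta$-subsumes $C$, so $C$ becomes redundant in the sense the procedure already handles (Definition \ref{def:relevant-consequence} and the deletion in line 6 of Procedure \ref{alg:Hornification}). Hence replacing $C$ by its condensation is just the composition of a sound addition with a redundancy deletion that Theorems \ref{th:Del-Val} and \ref{th:postpone-ABox} already tolerate, and no lifting lemma or induction over runs is needed. If you repair your route along the lines above it becomes a legitimate (if much heavier) alternative proof, but as written the key lemma fails.
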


One can prove that all relevant consequences of nearly-monadic and simple
clauses are also nearly-monadic and simple, so by using condensation to remove
redundant literals, we obtain Lemma \ref{lem:termination-simple}, which
clearly implies Theorem \ref{th:rewriting-Bool}.

\begin{restatable}{lemma}{terminationsimple}\label{lem:termination-simple}
    If used with condensation, Procedure \ref{alg:Hornification} terminates
    when applied to a simple nearly-monadic program $\P$.
\end{restatable}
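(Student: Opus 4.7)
The plan is to prove termination by establishing two invariants of Procedure \ref{alg:Hornification} with condensation applied to a simple nearly-monadic program $\P$: (i) every clause kept in $\Shorn \cup \Snhorn$ is a condensed simple nearly-monadic clause over the fixed, finite signature of $\P$; and (ii) the number of such clauses, up to variable renaming, is bounded by a function depending only on that signature. Since the algorithm discards $\theta$-subsumed clauses, $\Shorn \cup \Snhorn$ can grow to at most this bound, after which no new relevant consequence exists and the loop exits.

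For invariant (i), I would argue closure of simplicity under binary resolution, positive factoring, and condensation by case analysis. The non-trivial resolutions involve a non-Horn $\Pmon$-clause $C_1$, with center $x_1$ and head literal $B(x_1)$ or $U(x_1,x_1)$, paired with a second clause $C_2$ that is either a simple $\Pmon$-clause with center $x_2$ or a $\Prol$-rule of the form $\neg R(x,y) \vee S(x,y)$ or $\neg R(x,y) \vee S(y,x)$. The resolved-upon negative literal of $C_2$ takes one of the four shapes $\neg A(x_2)$, $\neg R(x_2,x_2)$, $\neg S(x_2,y_2)$, $\neg T(y_2,x_2)$, with every $y_2$ occurring only once in $C_2$. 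In each subcase the MGU identifies $x_1$ with $x_2$ and possibly maps a leaf variable $y_2$ onto the merged center; the surviving literals from $C_1$ become centred on $x_2$, and the leaf variables remain fresh and singly-occurring. For a $\Prol$-rule the surviving body atom $\neg R(x,y)$ picks up the same substitution and lands in one of the simple body shapes centred at the new center. Factoring in a simple clause can only merge head atoms that already share the center, so it preserves simplicity trivially, and condensation only deletes literals.

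For invariant (ii), I would count literals in a condensed simple clause with center $x$: at most one $\pm A(x)$ per unary predicate (the pair $A(x) \vee \neg A(x)$ is a tautology and discarded), at most one $\pm R(x,x)$ per binary predicate, and---this is where condensation is essential---at most one leaf literal $\neg S(x,y)$ and one $\neg T(y,x)$ per binary predicate, since any two leaf literals sharing predicate and orientation are rendered redundant by the renaming substitution $\{y \mapsto y'\}$ and collapsed by condensation. Hence the size of any condensed simple clause is bounded by a fixed linear function of the number of predicates in $\P$, whose signature does not grow during the run, yielding only finitely many equivalence classes up to variable renaming.

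The main obstacle is the closure argument in (i), and in particular the subcase where a simple $\Pmon$-clause resolves with a $\Prol$-rule: the $\Prol$-rule's head literal $S(x,y)$ or $S(y,x)$ does not itself match the simple pattern, so one must verify that after the MGU identifies $\{x,y\}$ with a subset of $\{x_2, y_2\}$ the surviving atom $\neg R$ always falls into one of the four admissible body shapes centred on the merged center. This closure property, together with the condensation step that blocks the unbounded duplication of leaf literals illustrated by Example \ref{ex:condensation}, is what makes simplicity a genuine inductive invariant of the modified procedure and therefore forces termination.
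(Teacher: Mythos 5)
Your proof is correct and takes essentially the same route as the paper's: you establish that the simple nearly-monadic clause shapes are closed under binary resolution, positive factoring, and condensation, and then use condensation to bound each kept clause to at most one leaf atom per binary predicate and orientation---exactly the paper's bound of $2n$ atoms of the form $R(x,y_i)$ or $R(y_i,x)$ up to variable renaming---so that clause size is linear in the signature, the clause space up to renaming is finite, and termination follows by the standard redundancy argument. Your explicit case analysis of resolutions against the role-inclusion rules merely spells out what the paper compresses into ``it is now straightforward to check.''
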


\begin{restatable}{theorem}{rewritingBool}\label{th:rewriting-Bool}
    Let ${\P = \DD(\Omega_\T)}$ for $\T$ a $\Bool$-TBox. Procedure
    \ref{alg:Hornification} with condensation terminates when applied to $\P$
    and returns $\Phorn$; furthermore, ${\Phorn \cup \Xi_\T}$ is a rewriting
    of $\T$.
\end{restatable}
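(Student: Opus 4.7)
The plan is to combine Lemma \ref{lem:termination-simple} (termination of the condensed procedure on simple nearly-monadic programs) with Lemma \ref{lem:proc-with-condensation} (soundness of the Horn-consequence guarantees under condensation), using the same glue argument as in Theorem \ref{th:SHI-rewriting}. In effect, the only genuinely new work is the small check that the hypotheses of Lemma \ref{lem:termination-simple} are met by $\DD(\Omega_\T)$, and that the nearly-monadic structure on which Theorem \ref{th:SHI-rewriting} relies survives condensation.

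For termination, I would first verify that $\Omega_\T$ is a $\Bool$-TBox. Because $\T$ is $\Bool$, Definition \ref{def:transitivity-standard} yields $\Upsilon_\T = \Theta_\T$, the transitivity-free fragment of $\T$, so $\Upsilon_\T$ is still $\Bool$. Definition \ref{def:transitivity} then only adds axioms of the form $A \sqsubseteq \eself{R}$, which use no $\forall$ and no qualified existentials and are permitted in $\SHI$; hence $\Omega_\T$ is a $\Bool$-TBox. By Theorem \ref{th:DLtoDD}(1), $\P = \DD(\Omega_\T)$ is a nearly-monadic simple disjunctive datalog program, so Lemma \ref{lem:termination-simple} gives termination of Procedure \ref{alg:Hornification} with condensation on $\P$; denote the returned set of Horn clauses by $\Phorn$.

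For the rewriting property, I would replay the proof of Theorem \ref{th:SHI-rewriting}, substituting the condensation-aware guarantees of Lemma \ref{lem:proc-with-condensation} for Theorem \ref{th:postpone-ABox}. Fix an arbitrary ABox $\A$, ground query $\Q$, and candidate answer $\sigma$. Theorem \ref{th:transitivity} gives ${\T \cup \A \models \Q\sigma}$ iff ${\Omega_\T \cup \Xi_\T(\A) \models \Q\sigma}$, and Theorem \ref{th:DLtoDD}(3) replaces $\Omega_\T$ with $\P$ on the right. Since $\Q\sigma$ is a conjunction of ground facts, each atom of $\Q\sigma$ is a Horn consequence, so Lemma \ref{lem:proc-with-condensation} yields ${\P \cup \Xi_\T(\A) \models \Q\sigma}$ iff ${\Phorn \cup \Xi_\T(\A) \models \Q\sigma}$. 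It remains to argue that the latter is equivalent to ${\Phorn \cup \Xi_\T \cup \A \models \Q\sigma}$, i.e., that re-applying $\Xi_\T$ after $\Phorn$ adds nothing. This is the place where the nearly-monadic property is used: Lemma \ref{lem:compile-NMP} (which I would check still applies under condensation, since condensation only deletes literals and cannot introduce atom shapes absent from the input) implies $\Phorn$ is nearly-monadic, so any binary atom produced by $\Phorn$ has the form $R(c,c)$; such atoms do not trigger any new conclusions when fed back through the role inclusion/transitivity rules in $\Xi_\T$ beyond what is already in $\Xi_\T(\A)$, so the fixed point of ${\Phorn \cup \Xi_\T}$ over $\A$ coincides with $\Phorn(\Xi_\T(\A))$ on facts over the original signature.

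The main obstacle, and the bit that is not purely bookkeeping, is verifying that condensation is compatible with the structural invariants this plan relies on. Concretely, I would have to show that (i) condensation of a simple nearly-monadic clause is again simple and nearly-monadic, so that the invariant maintained inside the proof of Lemma \ref{lem:termination-simple} is not broken, and (ii) the proof of Lemma \ref{lem:compile-NMP} goes through when each relevant consequence is condensed before insertion. Both are case analyses over the atom shapes allowed in Definition \ref{def:NMP}, noting that discarding literals and unifying remaining variables can only collapse permitted shapes into other permitted shapes; they should be immediate but deserve explicit mention. Once these are in place, the two halves of the theorem follow by direct chaining of the lemmas above.
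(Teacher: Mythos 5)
Your proposal is correct and follows essentially the same route as the paper, which derives Theorem \ref{th:rewriting-Bool} directly from Lemma \ref{lem:termination-simple} (via Theorem \ref{th:DLtoDD}, using that $\Omega_\T$ remains a normalised $\Bool$-TBox since only axioms $A \sqsubseteq \eself{R}$ are added) together with the argument of Theorem \ref{th:SHI-rewriting}, with Lemma \ref{lem:proc-with-condensation} supplying the condensation-aware version of Theorem \ref{th:postpone-ABox}. Your explicit checks that condensation preserves the simple/nearly-monadic clause shapes (immediate since the condensation $D$ satisfies $D \subseteq C$) fill in details the paper leaves implicit but introduce nothing that departs from its intended proof.
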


We thus obtain a tractable (w.r.t.\ data complexity) procedure for answering
queries over $\Bool$-TBoxes. Furthermore, given a ground query $\Q$ and a
nearly-monadic and simple program $\Phorn$ obtained by Theorem
\ref{th:rewriting-Bool}, it should be possible to match the $\NLogSpace$ lower
complexity bound by \citeA{Artale09thedl-lite} as follows. First, one should
apply backward chaining to $\Q$ and $\Phorn$ to compute a UCQ $\Q'$ such that
${\cert{\Q}{\Phorn}{\Xi_\T(\A)} = \cert{\Q'}{\emptyset}{\Xi_\T(\A)}}$; since
all nearly-monadic rules in $\Phorn$ are simple, it should be possible to show
that such `unfolding' always terminates. Second, one should transform $\Xi_\T$
into an equivalent piecewise-linear datalog program $\Xi_\T'$. Although these
transformations should be relatively straightforward, a formal proof would
require additional machinery and is thus left for future work.

\section{Limits to Strong Rewritability} \label{sec:LimitsStrong}

We next show that strong rewritings may not exist for rather simple non-Horn
$\ELU$-TBoxes that are rewritable in general. This is interesting because it
shows that an algorithm capable of rewriting a larger class of TBoxes
necessarily must depart from the common approaches based on sound inferences.

\begin{restatable}{theorem}{negativemodelpreserving}\label{th:negative-model-preserving}
    The $\ELU$-TBox $\T$ corresponding to the program $\P$ from Example
    \ref{ex:non-termination-1} and the ground CQ ${\Q = G(x_1)}$ are
    $\Q$-rewritable, but not strongly $\Q$-rewritable.
\end{restatable}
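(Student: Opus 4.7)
The proof splits into two independent parts; I will handle each separately, with most effort going into the negative direction.

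For the positive direction ($\Q$-rewritability), I will construct an explicit datalog program $\P_0$ using auxiliary predicates. The underlying semantic characterisation is: $\T \cup \A \models G(c)$ iff either $G(c) \in \A$ or, in $\A$'s undirected $E$-graph, the ``parity colouring'' placing $c$ in ``$B$ only'' cannot be extended to a model of $\T$. The latter is equivalent to: $c$'s connected component contains an undirected odd cycle, or the parity from $c$ conflicts with some asserted $G$ or $B$ in the same component. All of these conditions are expressible using binary predicates $P_0, P_1$ tracking even- and odd-length undirected $E$-walks, a unary predicate $\mathsf{Odd}$ defined by $\mathsf{Odd}(x) \leftarrow P_1(x,x)$, and similar conflict predicates. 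Since the definition of $\Q$-rewriting imposes no restriction on the predicates used by $\P_0$, the resulting program qualifies.

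For the negative direction I argue by contradiction: suppose $\P'$ is a strong $\Q$-rewriting of $\T$. Since $\T$ is silent on any predicate outside its signature, any rule of $\P'$ whose head uses a fresh predicate under a satisfiable body can be falsified by interpreting the fresh predicate as empty in a model of $\T$, contradicting $\T \models \P'$. Hence $\P'$ uses only the predicates $E, G, B$ of $\T$. Now consider, for each odd $n$, the ABox $\A_n = \{E(a_i, a_{(i+1) \bmod n}) \mid 0 \le i < n\}$, encoding a directed $n$-cycle. Its undirected $E$-graph is an odd cycle, so no 2-colouring into $\{G, B\}$ exists, and every model of $\T \cup \A_n$ must place some $a_j$ in $G \cap B$; the rules $E(u, v) \wedge G(v) \to B(u)$ and $E(u, v) \wedge B(v) \to G(u)$ then propagate ``$GB$'' backwards along every directed edge of the cycle, so each $a_i$ lies in $G \cap B$. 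Hence $\T \cup \A_n \models G(a_0)$, and $\P'$ must derive $G(a_0)$ from $\A_n$.

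Since $\A_n$ contains only $E$-facts, the first rule of $\P'$ that applies to $\A_n$ must have a body $\varphi$ consisting solely of $E$-atoms. I will establish the following key lemma: for such a body, $\varphi \to G(x)$ is entailed by $\T$ iff the subgraph of $\varphi$ induced by the vertices forward-$E$-reachable from $x$ contains an undirected odd cycle. The $(\Leftarrow)$ direction follows from backward propagation of $GB$ along directed edges; the $(\Rightarrow)$ direction constructs a countermodel by 2-colouring the (bipartite) forward-reachable subgraph with $x \in B$ only and placing all non-descendants in $G \cap B$, then checking edge by edge that this assignment satisfies $\T$, exploiting that $GB$ does not propagate forwards.

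For such a sound rule to fire via a homomorphism $h \colon \varphi \to \A_n$, the undirected odd cycle in $\varphi$'s forward-reachable subgraph (of length $\ell \le |\varphi|$) must map to a closed walk in the directed $n$-cycle. A short analysis of edge directions shows the image has odd, nonzero signed length $s$ with $|s| \le \ell$; since a homomorphism into the directed $n$-cycle forces the signed length of every undirected closed walk to be divisible by $n$, we obtain $n \mid s$ and hence $n \le |s| \le \ell \le |\varphi|$. Since $\P'$ is finite, its body sizes are bounded by some $K$; taking any odd $n > K$ therefore prevents any first-step rule from firing on $\A_n$, so $\P'$ derives nothing, contradicting $\T \cup \A_n \models G(a_0)$. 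The main obstacle is the key lemma, whose proof must exploit the directional asymmetry of $GB$-propagation, which is exactly what distinguishes the forward-reachable subgraph as the correct object of analysis.
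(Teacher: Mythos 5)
Your negative direction is sound, but your positive direction rests on a false semantic characterisation, and the program $\P_0$ built on it would not be a $\Q$-rewriting. In $\T$, colour information propagates only \emph{backwards} along $E$-edges: the axioms $\exists E.G \sqsubseteq B$ and $\exists E.B \sqsubseteq G$ constrain the \emph{source} of an edge from the colour of its \emph{target}. Hence what matters is the forward-reachable subgraph from $c$, not $c$'s undirected connected component---exactly as your own key lemma in the negative half states, so your two halves contradict each other. Concretely, take ${\A = \{ E(c_0,c_1), E(c_1,c_2), E(c_2,c_0), E(c_0,a) \}}$: the undirected component of $a$ contains an odd cycle, so your characterisation (and hence $\P_0$, whose $\mathsf{Odd}$ predicate is computed from undirected walks) yields $G(a)$; but nothing is forward-$E$-reachable from $a$, and the interpretation placing $a$ in $B$ only and each $c_i$ in ${G \cap B}$ is a model of ${\T \cup \A}$, so ${\T \cup \A \not\models G(a)}$ and $\P_0$ is unsound. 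The repair is to track \emph{directed} path parities, which is precisely the paper's construction: fresh predicates $\mathit{odd}(x,y)$ and $\mathit{even}(x,y)$ closed under composition with $E$, the rule ${\mathit{odd}(x,y) \wedge \mathit{even}(x,y) \rightarrow G(x)}$, and the two Horn rules ${E(x_1,x_0) \wedge G(x_0) \rightarrow B(x_1)}$ and ${E(x_1,x_0) \wedge B(x_0) \rightarrow G(x_1)}$ to handle ABoxes that assert $G$ and $B$ facts (which a $\Q$-rewriting must also handle, a case your sketch only gestures at). This matches the paper's property $(\ast)$: over $E$-only ABoxes, $G(v)$ is entailed iff some $w$ is reachable from $v$ by directed paths of both parities, which is equivalent to non-bipartiteness of the forward-reachable subgraph.

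Your negative direction, by contrast, is correct and genuinely different from the paper's. The paper introduces the infinite family $\R$ of rules ${E(x_n,x_0) \wedge E(x_n,x_{n-1}) \wedge \ldots \wedge E(x_1,x_0) \rightarrow G(x_n)}$ for even $n$, shows via the variable-freezing criterion that every $\Q$-rewriting must entail $\R$, and then shows that no finite program $\P'$ with ${\T \models \P'}$ entails $\R$: on the frozen ABox of a rule whose body exceeds every body size in $\P'$, soundness plus $(\ast)$ implies no new fact is ever derivable. Your directed-odd-cycle ABoxes $\A_n$ with the winding-number argument ($n$ divides the odd, hence nonzero, signed length of the homomorphic image of the odd cycle, so ${n \leq |\varphi|}$) reaches the same bounded-body contradiction and is arguably more self-contained, resting only on your key lemma, whose countermodel and backward-propagation arguments do go through. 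Two details to tighten: the claim that $\P'$ uses only $E$, $G$, $B$ is not literally true, since a strong rewriting may contain entailed rules with fresh predicates in both body and head (e.g., tautologies)---but such rules can never fire first on a signature-only ABox, which is all your argument needs; and you need the key lemma also for rules with head $B(x)$, which follows by the evident $G$/$B$ symmetry of $\T$.
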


The proof of Theorem \ref{th:negative-model-preserving} proceeds as follows.
First, we show that, for each ABox $\A$ encoding a directed graph, we have
${\cert{\Q}{\T}{\A} \neq \emptyset}$ iff the graph contains a pair of vertices
reachable by both an even and an odd number of edges. Second, we show that
latter property can be decided using a datalog program that uses new relations
not occurring in $\T$. Third, we construct an infinite set of rules $\R$
entailed by each strong rewriting of $\T$. Fourth, we show that ${\R'
\not\models \R}$ holds for each finite datalog program $\R'$ such that ${\T
\models \R'}$.
 
Since our procedure from Section \ref{sec:DatalogRewritings} produces only
strong rewritings, it cannot terminate on a TBox that has no strong
rewritings. This is illustrated in Example \ref{ex:non-termination-1}, which
shows that Procedure \ref{alg:Hornification} does not terminate when applied
to (the clausification of) the TBox from Theorem
\ref{th:negative-model-preserving}.

\section{Outlook}

Our work opens many possibilities for future research. On the theoretical
side, we will investigate whether one can decide existence of a strong
rewriting for a given $\SHI$-TBox $\T$, and to modify Procedure
\ref{alg:Hornification} so that termination is guaranteed.
\citeA{CarstenPODS2013} recently showed that rewritability of unary ground
queries over $\ALC$-TBoxes is decidable; however, their result does not
consider strong rewritability or binary ground queries. On the practical side,
we will investigate whether Procedure \ref{alg:Hornification} can be modified
to use ordered resolution instead of unrestricted resolution. We will also
implement our technique and evaluate its applicability.

\clearpage
\bibliographystyle{named}
\bibliography{ijcai13}

\begin{thebibliography}{}

\bibitem[\protect\citeauthoryear{Acciarri \bgroup \em et al.\egroup
  }{2005}]{DBLP:conf/aaai/AcciarriCGLLPR05}
Andrea Acciarri, Diego Calvanese, Giuseppe~De Giacomo, Domenico Lembo, Maurizio
  Lenzerini, Mattia Palmieri, and Riccardo Rosati.
\newblock Quonto: Querying ontologies.
\newblock In {\em AAAI}, pages 1670--1671, 2005.

\bibitem[\protect\citeauthoryear{Afrati \bgroup \em et al.\egroup
  }{1995}]{Afrati:1995un}
Foto Afrati, Stavros~S Cosmadakis, and Mihalis Yannakakis.
\newblock {On Datalog vs. polynomial time}.
\newblock {\em J.\ of Computer and System Sciences}, 51(2), 1995.

\bibitem[\protect\citeauthoryear{Artale \bgroup \em et al.\egroup
  }{2009}]{Artale09thedl-lite}
Alessandro Artale, Diego Calvanese, Roman Kontchakov, and Michael
  Zakharyaschev.
\newblock The {DL-Lite} family and relations.
\newblock {\em Journal of Artificial Intelligence Research}, 36:1--69, 2009.

\bibitem[\protect\citeauthoryear{Baader \bgroup \em et al.\egroup
  }{2003}]{DBLP:conf/dlog/2003handbook}
Franz Baader, Diego Calvanese, Deborah~L. McGuinness, Daniele Nardi, and
  Peter~F. Patel-Schneider.
\newblock {\em The Description Logic Handbook: Theory, Implementation, and
  Applications}.
\newblock Cambridge Univ. Press, 2003.

\bibitem[\protect\citeauthoryear{Baader \bgroup \em et al.\egroup
  }{2005}]{DBLP:conf/ijcai/BaaderBL05}
Franz Baader, Sebastian Brandt, and Carsten Lutz.
\newblock Pushing the el envelope.
\newblock In {\em IJCAI}, pages 364--369, 2005.

\bibitem[\protect\citeauthoryear{Bienvenue \bgroup \em et al.\egroup
  }{2013}]{CarstenPODS2013}
M.~Bienvenue, B.~{Ten Cate}, C.~Lutz, and F.~Wolter.
\newblock {Ontology-based Data Access: A Study through Disjunctive Datalog,
  CSP, and MMSNP}.
\newblock In {\em ACM PODS}, 2013.

\bibitem[\protect\citeauthoryear{Bishop \bgroup \em et al.\egroup
  }{2011}]{bishop2011owlim}
Barry Bishop, Atanas Kiryakov, Damyan Ognyanoff, Ivan Peikov, Zdravko Tashev,
  and Ruslan Velkov.
\newblock {OWLim}: A family of scalable semantic repositories.
\newblock {\em Semantic Web J.}, 2(1):33--42, 2011.

\bibitem[\protect\citeauthoryear{Calvanese \bgroup \em et al.\egroup
  }{2007}]{CDLLR07b}
Diego Calvanese, Giuseppe De~Giacomo, Domenico Lembo, Maurizio Lenzerini, and
  Riccardo Rosati.
\newblock Tractable reasoning and efficient query answering in description
  logics: The {DL}-{L}ite family.
\newblock {\em J.\ of Automated Reasoning (JAR)}, 39(3):385--429, 2007.

\bibitem[\protect\citeauthoryear{Chortaras \bgroup \em et al.\egroup
  }{2011}]{Chortaras11}
A.~Chortaras, D.~Trivela, and G.~Stamou.
\newblock Optimized query rewriting in {OWL 2 QL}.
\newblock In {\em CADE}, pages 192--206, 2011.

\bibitem[\protect\citeauthoryear{{Cuenca Grau} \bgroup \em et al.\egroup
  }{2012}]{gmsh12completeness-guarantees}
Bernardo {Cuenca Grau}, Boris Motik, Giorgos Stoilos, and Ian Horrocks.
\newblock {Completeness Guarantees for Incomplete Ontology Reasoners: Theory
  and Practice}.
\newblock {\em Journal of Artificial Intelligence Research}, 43:419--476, 2012.

\bibitem[\protect\citeauthoryear{del Val}{2005}]{DBLP:journals/ai/Val05}
Alvaro del Val.
\newblock First order lub approximations: characterization and algorithms.
\newblock {\em Artif. Intell.}, 162(1-2):7--48, 2005.

\bibitem[\protect\citeauthoryear{Eiter \bgroup \em et al.\egroup
  }{2012}]{DBLP:conf/aaai/EiterOSTX12}
Thomas Eiter, Magdalena Ortiz, Mantas Simkus, Trung-Kien Tran, and Guohui Xiao.
\newblock Query rewriting for horn-shiq plus rules.
\newblock In {\em AAAI}, 2012.

\bibitem[\protect\citeauthoryear{Glimm \bgroup \em et al.\egroup
  }{2008}]{GHLS08a}
B.~Glimm, I.~Horrocks, C.~Lutz, and U.~Sattler.
\newblock {Conjunctive Query Answering for the Description Logic
  $\mathcal{SHIQ}$}.
\newblock {\em Journal of Artificial Intelligence Research}, 31:151--198, 2008.

\bibitem[\protect\citeauthoryear{Grosof \bgroup \em et al.\egroup
  }{2003}]{DBLP:conf/www/GrosofHVD03}
Benjamin~N. Grosof, Ian Horrocks, Raphael Volz, and Stefan Decker.
\newblock Description logic programs: combining logic programs with description
  logic.
\newblock In {\em WWW}, pages 48--57, 2003.

\bibitem[\protect\citeauthoryear{Hustadt \bgroup \em et al.\egroup
  }{2005}]{DBLP:conf/ijcai/HustadtMS05}
Ullrich Hustadt, Boris Motik, and Ulrike Sattler.
\newblock Data complexity of reasoning in very expressive description logics.
\newblock In {\em IJCAI}, pages 466--471, 2005.

\bibitem[\protect\citeauthoryear{Hustadt \bgroup \em et al.\egroup
  }{2007}]{hms07reasoning}
Ullrich Hustadt, Boris Motik, and Ulrike Sattler.
\newblock {Reasoning in Description Logics by a Reduction to Disjunctive
  Datalog}.
\newblock {\em Journal of Automated Reasoning}, 39(3):351--384, 2007.

\bibitem[\protect\citeauthoryear{Krisnadhi and
  Lutz}{2007}]{DBLP:conf/lpar/KrisnadhiL07}
Adila Krisnadhi and Carsten Lutz.
\newblock Data complexity in the $\mathcal{EL}$ family of description logics.
\newblock In {\em LPAR}, 2007.

\bibitem[\protect\citeauthoryear{Lutz and Wolter}{2012}]{Lutz:2012ug}
Carsten Lutz and Frank Wolter.
\newblock {Non-Uniform Data Complexity of Query Answering in Description
  Logics}.
\newblock In {\em KR}, 2012.

\bibitem[\protect\citeauthoryear{Lutz}{2008}]{DBLP:conf/cade/Lutz08}
C.~Lutz.
\newblock {The Complexity of Conjunctive Query Answering in Expressive
  Description Logics}.
\newblock In {\em IJCAR}, 2008.

\bibitem[\protect\citeauthoryear{Motik \bgroup \em et al.\egroup
  }{2009}]{msh09hypertableau}
Boris Motik, Rob Shearer, and Ian Horrocks.
\newblock {Hypertableau Reasoning for Description Logics}.
\newblock {\em Journal of Artificial Intelligence Research}, 36:165--228, 2009.

\bibitem[\protect\citeauthoryear{Ortiz \bgroup \em et al.\egroup
  }{2011}]{DBLP:conf/ijcai/OrtizRS11}
Magdalena Ortiz, Sebastian Rudolph, and Mantas Simkus.
\newblock Query answering in the horn fragments of the description logics
  $\mathcal{SHOIQ}$ and $\mathcal{SROIQ}$.
\newblock In {\em IJCAI}, pages 1039--1044, 2011.

\bibitem[\protect\citeauthoryear{P\'erez-Urbina \bgroup \em et al.\egroup
  }{2010}]{Hector10a}
H\'ector P\'erez-Urbina, Boris Motik, and Ian Horrocks.
\newblock Tractable query answering and rewriting under description logic
  constraints.
\newblock {\em Journal of Applied Logic (JAL)}, 8(2):186--209, 2010.

\bibitem[\protect\citeauthoryear{Rodriguez-Muro and
  Calvanese}{2012}]{DBLP:conf/kr/Rodriguez-MuroC12}
Mariano Rodriguez-Muro and Diego Calvanese.
\newblock High performance query answering over {DL-Lite} ontologies.
\newblock In {\em KR}, 2012.

\bibitem[\protect\citeauthoryear{Rosati and
  Almatelli}{2010}]{DBLP:conf/kr/RosatiA10}
Riccardo Rosati and Alessandro Almatelli.
\newblock Improving query answering over {DL-Lite} ontologies.
\newblock In {\em KR}, 2010.

\bibitem[\protect\citeauthoryear{Selman and Kautz}{1996}]{selman1996knowledge}
B.~Selman and H.~Kautz.
\newblock Knowledge compilation and theory approximation.
\newblock {\em J.\ of the ACM (JACM)}, 43(2):193--224, 1996.

\bibitem[\protect\citeauthoryear{Simancik}{2012}]{DBLP:conf/dlog/Simancik12}
F.~Simancik.
\newblock {Elimination of Complex RIAs without Automata}.
\newblock In Y.~Kazakov, D.~Lembo, and F.~Wolter, editors, {\em Proc.\ of the
  2012. Int.\ Workshop on Description Logics (DL 2012)}, volume 846 of {\em
  CEUR Workshop Proceedings}, Rome, Italy, June 7--10 2012.

\bibitem[\protect\citeauthoryear{Stocker and
  Smith}{2008}]{DBLP:conf/owled/StockerS08}
Markus Stocker and Michael Smith.
\newblock Owlgres: A scalable owl reasoner.
\newblock In {\em OWLED}, 2008.

\bibitem[\protect\citeauthoryear{Wegener}{1987}]{Wegener87}
I.~Wegener.
\newblock {\em The Complexity of Boolean Functions}.
\newblock John Wiley and Sons, 1987.

\bibitem[\protect\citeauthoryear{Wu \bgroup \em et al.\egroup }{2008}]{Wu08}
Zhe Wu, George Eadon, Souripriya Das, Eugene~Inseok Chong, Vladimir Kolovski,
  Melliyal Annamalai, and Jagannathan Srinivasan.
\newblock Implementing an inference engine for {RDFS/OWL} constructs and
  user-defined rules in {Oracle}.
\newblock In {\em ICDE}, pages 1239--1248, 2008.

\end{thebibliography}

\ifdraft{
    \clearpage
    \onecolumn
    \appendix
    
    \section{Proofs for Section \ref{sec:NegativeResults}}\label{sec:Proofs-NegativeResults}

Before presenting the proof of Theorem \ref{th:non-Rewritable-General-ELU}, we
recapitulate the definition of \emph{monotone polynomial projections}, which
are frequently used to transfer bounds on the circuit size from one family of
monotone Boolean functions to another. Let $f$ be a monotone Boolean function
with inputs $\vec{x}$, and let $g$ be a monotone Boolean function with inputs
$\vec{y}$. Then, $f$ is a \emph{monotone projection} of $g$ if a mapping
${\rho : \vec{y} \rightarrow \{ \false,\true \} \cup \vec{x}}$ exists such
that ${f(\vec{x}) = g(\rho(\vec{y}))}$ for each value of $\vec{x}$. Given such
a mapping $\rho$, a monotone circuit that computes $g(\vec{y})$ can be
transformed to a monotone circuit that computes $f(\vec{x})$ by replacing each
input ${y_i \in \vec{y}}$ with $\rho(y_i)$. Furthermore, a family of Boolean
functions ${\{ f_n \}}$ is a \emph{polynomial monotone projection} of a family
${\{ g_k \}}$ if a polynomial $p(n)$ exists such that each $f_n$ is a monotone
projection of some $g_k$ with ${k \leq p(n)}$; if that is the case and the
family of functions ${\{ g_k \}}$ can be realised by a family of monotone
circuits of polynomial size, then so can ${\{ f_n \}}$.

\nonRewritableGeneralELU*

\begin{proof}
Let $\T$ be the following acyclic $\ELU$-TBox:
\begin{displaymath}
\begin{array}{r@{\;}l@{\qquad}r@{\;}l}
    F_R & \equiv R \sqcap \exists \text{edge}.R     & F_B           & \equiv B \sqcap \exists \text{edge}.B \\[0.5ex]
    F_G & \equiv G \sqcap \exists \text{edge}.G     & F             & \equiv F_R \sqcup F_B \sqcup F_G \\[0.5ex]
    V   & \sqsubseteq R \sqcup G \sqcup B           & \mathit{NC}   & \equiv \exists \text{vertex}.F \\
\end{array}
\end{displaymath}
Furthermore, let $v$ be a fixed individual, and let ${\Q = \mathit{NC}(v)}$.
We next represent the problem of answering $\Q$ over $\T$ and an arbitrary
input ABox $\A$ using a family of monotone functions ${\{ g_{n(u)} \}}$. The
\emph{input size} of $\A$ is the number $u$ of individuals occurring in $\A$
different from the fixed individual $v$; we assume that these individuals are
labelled ${a_1, \ldots, a_u}$. Furthermore, to unify the notation, let
${a_{u+1} = v}$. Using the signature of $\T$, one can then construct at most
${n(u) = 2(u+1)^2 + 9(u+1)}$ assertions; hence, we encode $\A$ using $n(u)$
bits $y_{i,j}^\text{edge}$, $y_{i,j}^\text{vertex}$, and $y_i^A$ as follows:
\begin{itemize}
    \item for each ${R \in \{ \text{edge},\text{vertex} \}}$ and ${1 \leq i,j
    \leq u}$, bit $y_{i,j}^R$ is $\true$ if and only if ${R(a_i,a_j) \in \A}$;
    and

    \item for each ${A \in \{ R,G,B,F_R,F_B,F_G,F,V,\mathit{NC} \}}$, bit
    $y_i^A$ is $\true$ if and only if ${A(a_i) \in \A}$.
\end{itemize}
The family of Boolean functions ${\{ g_{n(u)} \}}$ is defined such that, given
a vector of bits $\vec{y}$ encoding an ABox $\A$ of input size $u$, we have
${g_{n(u)}(\vec{y}) = \true}$ if and only if ${\T \cup \A \models \Q}$. Since
first-order logic is monotonic, each $g_{n(u)}$ is clearly monotone.

Let ${\{ f_{m(s)} \}}$ be the family of monotone Boolean functions associated
with non-3-colorability as defined in Section \ref{sec:NegativeResults}. We
next show that ${\{ f_{m(s)} \}}$ is a monotone polynomial projection of ${\{
g_{n(u)} \}}$. To this end, we first show that, for each positive integer $s$,
function $f_{m(s)}$ is a monotone projection of $g_{n(s)}$. Let $\rho$ be the
following mapping, where $A$ is a placeholder for each concept from the
signature of $\T$ different from $V$:
\begin{displaymath}
\begin{array}{@{}r@{\;}l@{}}
    \rho(y_{i,j}^{\text{edge}}) = & \rho(y_{j,i}^{\text{edge}}) =
    \begin{cases}
        x_{i,j} & \text{for } 1 \leq i < j \leq s \\
        \false  & \text{otherwise} \\
    \end{cases} \\[2ex]
    \rho(y_{i,j}^{\text{vertex}}) = &
    \begin{cases}
        \true   & \text{for } i = s + 1 \text{ and } 1 \leq j \leq s \\
        \false  & \text{otherwise} \\
    \end{cases} \\[2.4ex]
    \rho(y_i)^V = &
    \begin{cases}
        \true   & \text{for } 1 \leq i \leq s \\
        \false  & \text{for } i = s+1 \\
    \end{cases} \\[2.2ex]
    \rho(y_i)^A =   & \false \quad \text{for } 1 \leq i \leq s+1
\end{array}
\end{displaymath}
We now show that ${f_{m(s)}(\vec{x}) = g_{n(s)}(\rho(\vec{y}))}$ for each
vector $\vec{x}$ of $m(s)$ bits. To this end, let $G$ be the undirected graph
associated with $\vec{x}$ containing nodes ${1, \ldots, s}$. It is
straightforward to check that $\rho(\vec{y})$ is then a vector of $n(s)$ bits
that encodes the ABox $\A_G$ with individuals ${a_1, \ldots, a_s, a_{s+1} =
v}$ containing the following assertions:
\begin{itemize}
    \item ${\text{edge}(a_i,a_j)}$ and ${\text{edge}(a_j,a_i)}$ for all ${1
    \leq i < j \leq s}$ such that $G$ contains an edge between $i$ and $j$,
    and

    \item assertions $V(a_i)$ and ${\text{vertex}(v,a_i)}$ for each ${1 \leq i
    \leq s}$.
\end{itemize}
Furthermore, it is routine to check that $G$ is non-3-colorable iff ${\T \cup
\A_G \models \Q}$; but then, by the definition of $f_{m(s)}$ and $g_{n(s)}$,
we have ${f_{m(s)}(\vec{x}) = g_{n(s)}(\rho(\vec{y}))}$, as required. Finally,
for ${p(z) = z^2}$, we clearly have ${n(s) \leq (m(s))^2}$. Thus, the family
of monotone functions ${\{ f_{m(s)} \}}$ is a monotone polynomial projection
of the family of monotone functions ${\{ g_{n(s)} \}}$.

The above observation, Item 2 of Theorem \ref{th:datalog-inexpressibility},
and the properties of monotone polynomial projections imply that the query
answering problem for $\Q$ and $\T$ cannot be solved using monotone circuits
of polynomial size. Now assume that a datalog program $\P$ exists that is a
$\Q$-rewriting of $\T$. By Item 1 of Theorem
\ref{th:datalog-inexpressibility}, answering $\Q$ over $\P$, and so the
problem of answering $\Q$ over $\T$ as well, can be solved using monotone
circuits of polynomial size, which is a contradiction.
\end{proof}

\section{Proof of Theorem \ref{th:transitivity}}\label{sec:Proofs-Transitivity}

\transitivity*

\begin{proof}
We prove the contrapositive: for each fact $\alpha$, we have ${\T \cup \A
\not\models \alpha}$ if and only if ${\Omega_\T \cup \Xi_\T(\A) \not\models
\alpha}$.

\medskip

($\Rightarrow$) It is routine to show that $\Upsilon_\T$ is a
model-conservative extension of $\T$ \cite{DBLP:conf/dlog/Simancik12}.
Furthermore, for all concepts $A$ and $B$, each atomic role $R$, and each role
$S$ such that ${A \sqsubseteq \exists S.B \in \T}$, ${S \roleh{\T} R}$, and
${S \roleh{\T} R^-}$, we clearly have ${\T \models A \sqsubseteq \eself{R}}$.
By these two properties, $\Omega_\T$ is a model-conservative extension of
$\T$. Finally, it is obvious that ${\T \models \Xi_\T}$. Now consider an
arbitrary fact $\alpha$ such that ${\T \cup \A \not\models \alpha}$. Then, an
interpretation $I$ exists such that ${I \models \T \cup \A}$ and ${I
\not\models \alpha}$. Since $\Omega_\T$ is a model-conservative extension of
$\T$ and ${\T \models \Xi_\T}$, an interpretation $J$ exists such that ${J
\models \Omega_\T}$ and ${J \models \Xi_\T(\A)}$; furthermore, since $\alpha$
does not use the symbols occurring in $\Omega_\T$ but not in $\T$, we also
have ${J \not\models \alpha}$. Thus, we have ${\Omega_\T \cup \Xi_\T(\A)
\not\models \alpha}$, as required.

\medskip

($\Leftarrow$) Consider an arbitrary fact $\alpha$ such that ${\Omega_\T \cup
\Xi_\T(\A) \not\models \alpha}$. Then, an interpretation ${I = (\Delta^I,
\cdot^I)}$ exists such that ${I \models \Omega_\T \cup \Xi_\T(\A)}$ and ${I
\not\models \alpha}$. Without loss of generality, we can assume that $I$ is of
a special tree shape, which we describe next. Let $N_\A$ be the set of
individuals occurring in $\A$, and let $N$ be the smallest set such that
${N_\A \subseteq N}$ and, if ${u \in N}$, then ${u.i \in N}$ for each
nonnegative integer $i$. Then, we can assume that $I$ satisfies all of the
following properties:
\begin{enumerate}
    \item\label{can:1} ${\Delta^I \subseteq N}$;

    \item\label{can:2} $c^I = c$ for each individual ${c \in N_\A}$;

    \item\label{can:3} for each atomic role $R$, each pair in $R^I$ is of the
    form $\tuple{s,s.i}$, $\tuple{s.i,s}$, or $\tuple{a,b}$ for ${s \in N}$
    and ${a,b \in N_\A}$;
    
    \item\label{can:4} for each pair ${\tuple{c,d} \in R^I}$ such that ${c,d
    \in N_\A}$, we have ${c = d}$ or ${R(c,d) \in \Xi_\T(\A)}$; and

    \item\label{can:5} for each atomic role $R$, each individual ${c \in
    N_\A}$ and each ${c.i \in N}$, if ${\{ \tuple{c,c.i}, \tuple{c.i,c} \}
    \subseteq R^I}$, then there exist concepts $A$ and $B$ and a role $S$ such
    that ${A \sqsubseteq \exists S.B \in \T}$, ${S \roleh{\T} R}$, ${S
    \roleh{\T} R^-}$, and ${c \in A^I}$.
\end{enumerate}
A model $I$ of $\Omega_\T$ satisfying properties \eqref{can:1}--\eqref{can:3}
can be obtained, for example, using the hypertableau calculus by
\citeA{msh09hypertableau}. Furthermore, if translated into first-order logic,
all role atoms in the consequent of an axiom in $\Omega_\T$ are of the form
$R(x,x)$, or they occur in formulae of the form ${\exists y. R(x,y) \wedge
\ldots}$; thus, the hypertableau calculus cannot derive an atom of the form
$R(a,b)$ with ${a \neq b}$, thus ensuring property \eqref{can:4}. Finally,
since $\Omega_\T$ is normalised, concepts of the form ${\exists S.B}$ occur in
$\Omega_\T$ only in axioms of the form ${A \sqsubseteq \exists S.B}$; but
then, the hypertableau calculus ensures that ${\tuple{c,c.i} \in S^I}$ or
${\tuple{c.i,c} \in S^I}$ only if ${c \in A^I}$; consequently, the only way
for ${\{ \tuple{c,c.i}, \tuple{c.i,c} \} \subseteq R^I}$ to hold is if
property \eqref{can:5} holds.

To complete the proof, we next construct an interpretation $J$ and show that
${J \models \T \cup \A}$ and ${J \not\models \alpha}$. In particular, let $J$
be the following interpretation defined inductively on the quasi-ordering
corresponding to relation $\roleh{\T}$:
\begin{itemize}
    \item ${\Delta^J = \Delta^I}$;

    \item ${c^J = c^I = c}$ for each individual ${c \in N_\A}$;

    \item ${A^J = A^I}$ for each atomic concept $A$;

    \item $R^J$ is the transitive closure of $R^I$ for each atomic role $R$
    that is transitive in $\T$; and

    \item ${R^J = R^I \cup \bigcup\limits_{S \roleh{\T} R \text{ and } R
    \not\roleh{\T} S} S^J}$ for each atomic role $R$ that is not transitive in
    $\T$.
\end{itemize}

If $\T$ does not contain concepts of the form $\eself{R}$, then ${J \models
\T}$ follows from the standard proofs of transitivity elimination in $\SHI$
\cite{DBLP:conf/dlog/Simancik12} and $\Bool$ \cite{Artale09thedl-lite};
furthermore, it is easy to see that the presence of atoms $\eself{R}$ requires
only minor changes to these proofs. Furthermore, since ${\A \subseteq
\Xi_\T(\A)}$, we clearly have ${J \models \A}$.

We are left to show that ${J \not\models \alpha}$. If $\alpha$ is of the form
$A(c)$, the claim follows from the proofs by \citeA{DBLP:conf/dlog/Simancik12}
and \citeA{Artale09thedl-lite}. Hence, assume that $\alpha$ is of the form
${\alpha = T(c,d)}$, and assume for the sake of contradiction that ${J \models
T(c,d)}$. Then, by the definition of $J$, there exist an atomic role $R$ and
${\{ u_0, u_1, \ldots, u_n \} \subseteq \Delta^I}$ such that $R$ is transitive
in $\T$, ${R \roleh{\T} T}$, ${c = u_0}$, ${d = u_n}$, and
${\tuple{u_{i-1},u_i} \in R^I}$ for each ${1 \leq i \leq n}$. We consider the
following two cases.
\begin{itemize}
    \item Assume that, for each ${0 \leq i \leq n}$, if ${u_i \in N_\A}$, then
    ${u_i = c}$. Then, we clearly have ${c = d}$. Since $I$ satisfies property
    \eqref{can:3}, some ${1 \leq i < n}$ exists such that $u_i$ is of the form
    $c.j$ for some $j$ and ${\{ \tuple{c,c.j}, \tuple{c.j,c} \} \subseteq
    R^I}$ holds. Furthermore, since $I$ satisfies property \eqref{can:5},
    concepts $A$ and $B$ and a role $S$ exist such that ${A \sqsubseteq
    \exists S.B \in \T}$, ${S \roleh{\T} R}$, ${S \roleh{\T} R^-}$, and ${c
    \in A^I}$. By Definition \ref{def:transitivity}, then ${A \sqsubseteq
    \eself{R} \in \Omega_\T}$, which implies ${\tuple{c,c} \in R^I}$. Finally,
    ${R \roleh{\T} T}$ implies ${R \roleh{\Omega_\T} T}$; hence, we have
    ${\tuple{c,c} \in T^I}$ as well, which contradicts our assumption that ${I
    \not\models \alpha}$.

    \item Assume that some ${1 \leq i \leq n}$ exists such that ${u_i \in
    N_\A}$ and ${u_i \neq c}$. We eliminate from the sequence ${u_0, u_1,
    \ldots, u_n}$ each subsequence ${u_{i+1}, \ldots, u_j}$ with ${0 \leq i <
    j \leq n}$ such that ${u_i \in N_\A}$, ${u_j \in N_\A}$, and ${u_k \in N
    \setminus N_\A}$ for each ${i < k < j}$; let ${v_0, \ldots, v_\ell}$ be
    the resulting sequence. Since $I$ satisfies property \eqref{can:3}, each
    eliminated subsequence satisfies ${u_i = u_j}$; hence, for each ${1 \leq i
    \leq \ell}$, we have ${\tuple{v_{i-1},v_i} \in R^I}$. Furthermore, since
    $u_i$ exists such that ${u_i \in N_\A}$ and ${u_i \neq c}$, we have ${\ell
    \geq 1}$, ${v_0 = c}$, and ${v_\ell = d}$. Finally, note that the above
    definition eliminates each subsequence ${u_i,u_{i+1}}$ such that ${u_i =
    u_{i+1}}$ (condition ${u_k \in N \setminus N_\A}$ for each ${i < k < j}$
    is then vacuously satisfied); therefore, sequence ${v_0, \ldots, v_\ell}$
    consists of distinct individuals in $N_\A$. But then, since $I$ satisfies
    property \eqref{can:4}, we have that ${R(v_{i-1},v_i) \in \Xi_\T(\A)}$ for
    each ${1 \leq i \leq \ell}$. Finally, by the definition of $\Xi_\T$, then
    $\Xi_\T(\A)$ contains ${R(v_0,v_\ell) = R(c,d)}$, and consequently
    ${T(c,d) \in \Xi_\T(\A)}$ as well. This, however, contradicts our
    assumption that ${I \not\models \alpha}$. \qedhere
\end{itemize}
\end{proof}

\section{Proofs for Section \ref{sec:DLtoDD}}\label{sec:Proofs-DLtoDD}

\DLtoDD*

\begin{proof}[Sketch]
The algorithm by \citeA{hms07reasoning} first translates $\T$ into a set of
skolemised clauses. An inspection of the algorithm reveals that, without
concepts of the form $\eself{R}$, each resulting clause is of one of the
following forms, where $R$ is an atomic role, $f$ is a function symbol, and
$A_{(i)}$, $B_{(i)}$, $C_{(i)}$, and $D_{(i)}$ are atomic concepts, $\top$, or
$\bot$:
\begin{align}
    \label{ctype:1} \neg A(x) \vee \underline{R(x,f(x))} \\
    \label{ctype:2} \neg A(x) \vee \underline{R(f(x),x)} \\
    \label{ctype:3} \underline{\neg R(x,y)} \vee S(x,y) \\
    \label{ctype:4} \underline{\neg R(x,y)} \vee S(y,x) \\
    \label{ctype:5} \neg A(x) \vee \underline{\neg R(x,y)} \vee \neg B(y) \vee C(x) \vee D(y) \\
    \label{ctype:6} \bigvee \neg A_i(x) \vee \bigvee \underline{\neg B_i(f(x))} \vee \bigvee C_i(x) \vee \bigvee \underline{D_i(f(x))} \\
    \label{ctype:7} \bigvee \underline{\neg A_i(x)} \vee \bigvee \underline{C_i(x)}
\end{align}
Furthermore, since $\T$ is normalised, axioms with concepts of the form
$\eself{R}$ are translated into clauses of the following form:
\begin{align}
    \label{ctype:8} \neg \underline{R(x,x)} \vee A(x) \\
    \label{ctype:9} \neg A(x) \vee \underline{R(x,x)}
\end{align}

The algorithm next saturates the resulting set of clauses by \emph{ordered}
resolution, which is parameterised by a carefully constructed literal ordering
and selection function; these parameters ensures that binary resolution and
positive factoring are performed only with literals that are underlined in
\eqref{ctype:1}--\eqref{ctype:9}. The selection function can be extended to
select atom $R(x,x)$ in each clause of type \eqref{ctype:8}; furthermore, the
ordering can be modified so that each atom $R(x,x)$ is larger than all atoms
$A(x)$, thus ensuring that only atom $R(x,x)$ participates in inferences with
clauses of type \eqref{ctype:9}. \citeA{hms07reasoning} show that each binary
resolution or positive factoring inference, when applied to clauses of type
\eqref{ctype:1}--\eqref{ctype:7}, produces a clause of type
\eqref{ctype:1}--\eqref{ctype:4} or \eqref{ctype:6}--\eqref{ctype:7}. This is
easily extended to clauses of type \eqref{ctype:8}--\eqref{ctype:9}:
\begin{itemize}
    \item a clause of type \eqref{ctype:8} cannot be resolved with any other
    clause;

    \item resolving a clause of type \eqref{ctype:9} with a clause of type
    \eqref{ctype:5} produces a clause of type \eqref{ctype:7}; and

    \item resolving a clause of type \eqref{ctype:9} with a clause of type
    \eqref{ctype:3} or \eqref{ctype:4} produces a clause of type
    \eqref{ctype:9}.
\end{itemize}

\citeA{hms07reasoning} then show that the disjunctive program $\DD(\T)$ can be
obtained as the set of all clauses after saturation of type
\eqref{ctype:3}--\eqref{ctype:5} and \eqref{ctype:7}. For the case when $\T$
contains atoms of the form $\eself{R}$, program $\DD(\T)$ should also include
clauses of type \eqref{ctype:8} and \eqref{ctype:9}, and the proof by
\citeA{hms07reasoning} applies without any problems. Furthermore, it is
straightforward to verify that $\DD(\T)$ is a nearly-monadic program.

Finally, if $\T$ is a $\Bool$-TBox, the only difference is that, in each
clause of type \eqref{ctype:5}, we have either ${A = \top}$ and ${C = \bot}$,
or ${B = \top}$ and ${D = \bot}$. Since saturation does not introduce clauses
of type \eqref{ctype:5}, program $\DD(\T)$ is clearly simple.
\end{proof}  
  
\section{Proofs for Section \ref{sec:DDtoD}}\label{sec:Proofs-DDtoD}

\postponeABox*

\begin{proof}
To prove our claim, we assume that Procedure \ref{alg:Hornification} is
applied to ${\S = \N \cup \A}$. Towards this goal, we associate with each
clause ${C \in \Shorn \cup \Snhorn}$ a set of facts $F_C$; for each such
$F_C$, let ${\neg F_C = \bigvee_{A \in F_C} \neg A}$. We define $F_C$
inductively on the applications of inference rules in Procedure
\ref{alg:Hornification}; furthermore, we show in parallel that, at any point
in time, for each clause ${C \in \Shorn \cup \Snhorn}$ and the corresponding
set $F_C$, the following properties are satisfied:
\begin{enumerate}\renewcommand{\theenumi}{\alph{enumi}}\renewcommand{\labelenumi}{(\theenumi)}
    \item\label{post:D-implied} ${\N \models \neg F_C \vee C}$, and

    \item\label{post:A-subset} ${F_C \subseteq \A}$.
\end{enumerate}

For the base case, consider an arbitrary clause ${C \in \S}$. If ${C \in \N}$,
we define ${F_C = \emptyset}$; otherwise, we have ${C \in \A \setminus \N}$,
so $C$ is a fact, and we define ${F_C = \{ C \}}$. Properties
\eqref{post:D-implied} and \eqref{post:A-subset} are clearly satisfied.

For the induction step, assume that the two properties are satisfied for each
clause ${C \in \Shorn \cup \Snhorn}$ at some point in time. We consider the
following two ways in which Procedure \ref{alg:Hornification} can extend
$\Shorn$ or $\Snhorn$.
\begin{itemize}
    \item Assume that resolution is applied to clauses ${C_1 = D_1 \vee A_1}$
    and ${C_2 = D_2 \vee \neg A_2}$, deriving clause ${C = D_1\sigma \vee
    D_2\sigma}$. Let ${F_C = F_{C_1} \cup F_{C_2}}$, so property
    \eqref{post:A-subset} is clearly satisfied. By induction assumption, we
    have ${\N \models \neg F_{C_1} \vee D_1 \vee A_1}$ and ${\N \models \neg
    F_{C_2} \vee D_2 \vee \neg A_2}$. By the soundness of binary resolution,
    we have ${\{ D_1 \vee A_1, \;\; D_2 \vee \neg A_2 \} \models D_1\sigma
    \vee D_2\sigma}$. But then, since ${\neg F_{C_1}}$ and ${\neg F_{C_2}}$
    contain only constants, we have ${\N \models \neg F_{C_1} \vee \neg
    F_{C_2} \vee D_1\sigma \vee D_2\sigma}$, as required for
    \eqref{post:D-implied}.
    
    \item Assume that positive factoring is applied to a clause ${C_1 = D_1
    \vee A_1 \vee B_1}$, deriving clause ${C = D_1\sigma \vee A_1\sigma}$. Let
    ${F_C = F_{C_1}}$, so property \eqref{post:A-subset} is clearly satisfied.
    By induction assumption, we have ${\N \models \neg F_{C_1} \vee D_1 \vee
    A_1 \vee B_1}$. By the soundness of positive factoring, we have ${\{ D_1
    \vee A_1 \vee B_1 \} \models D_1\sigma \vee A_1\sigma}$. But then, since
    ${\neg F_{C_1}}$ contains only constants, we have ${\N \models \neg
    F_{C_1} \vee D_1\sigma \vee A_1\sigma}$, as required for
    \eqref{post:D-implied}.
\end{itemize}

We now show the main claim of this theorem. To this end, consider an arbitrary
Horn clause $C$ such that ${\N \cup \A \models C}$. By Theorem
\ref{th:Del-Val}, at some point in time during the application of Procedure
\ref{alg:Hornification} to $\S$, we have ${\Shorn \models C}$. Note that
$\Shorn$ is a finite set.

Consider an arbitrary Horn clause ${D \in \Shorn}$. By property
\eqref{post:D-implied}, we have ${\N \models \neg F_D \vee D}$. Furthermore,
${\neg F_D \vee D}$ is a Horn clause, so by Theorem \ref{th:Del-Val}, at some
point in time time during the application of Procedure \ref{alg:Hornification}
to $\N$, we have ${\Nhorn^D \models \neg F_D \vee D}$. Finally, by property
\eqref{post:A-subset}, we have ${F_D \subseteq \A}$. These observations now
imply that ${\Nhorn^D \cup \A \models D}$.

Now let ${\Nhorn' = \bigcup_{D \in \Shorn} \Nhorn^D}$; clearly, ${\Nhorn' \cup
\A \models \Shorn}$. Note that Procedure \ref{alg:Hornification} is monotonic
in the sense that, if ${\Nhorn \models E}$ at some point in time for some
clause $E$, then this also holds at all future points in time. Furthermore,
$\Nhorn'$ is finite, so at some point in time during the application of
Procedure \ref{alg:Hornification} to $\N$, we have ${\Nhorn \models \Nhorn'}$.
By the observations from the previous paragraph, we then have ${\Nhorn \cup \A
\models \Shorn}$ as well, which implies ${\Nhorn \cup \A \models C}$, as
required.
\end{proof}

\compileNMP*

\begin{proof}
We prove by induction on the application of the inference rules in Procedure
\ref{alg:Hornification} that, at any point in time, ${\Phorn \cup \Pnhorn}$ is
a nearly-monadic program. The base case is clearly satisfied since $\P$ is
nearly-monadic. For the induction base, we consider the possible inferences
that can derive a clause in ${\Phorn \cup \Pnhorn}$. First, note that positive
factoring is never applicable to a clause of type \ref{cond:rol} from
Definition \ref{def:NMP}; furthermore, when applied to a clause of type
\ref{cond:mon}, positive factoring always produces a clause of the same type.
Second, since clauses of type \ref{cond:rol} are Horn, binary resolution can
be applied only if at least one clause is of type \ref{cond:mon}, and the
resolvent is then clearly of type \ref{cond:mon} as well.
\end{proof}

\SHIrewriting*

\begin{proof}
Consider an arbitrary ABox $\A$ and an arbitrary fact $\alpha$. By Theorem
\ref{th:transitivity}, we have that ${\T \cup \A \models \alpha}$ if and only
if ${\Omega_\T \cup \Xi_\T(\A) \models \alpha}$. By Theorem \ref{th:DLtoDD},
the latter holds if and only if ${\DD(\Omega_\T) \cup \Xi_\T(\A) \models
\alpha}$. Moreover, since $\alpha$ is a Horn clause, by Theorem
\ref{th:postpone-ABox}, the latter holds if and only if ${\Phorn \cup
\Xi_\T(\A) \models \alpha}$. We now show that the latter holds if and only if
${\Phorn \cup \Xi_\T \cup \A \models \alpha}$. Clearly, ${\Phorn \cup
\Xi_\T(\A) \models \alpha}$ implies ${\Phorn \cup \Xi_\T \cup \A \models
\alpha}$ by monotonicity of first-order logic, so we next focus on showing
that ${\Phorn \cup \Xi_\T(\A) \not\models \alpha}$ implies ${\Phorn \cup
\Xi_\T \cup \A \not\models \alpha}$.

By Theorem \ref{th:DLtoDD}, Lemma \ref{lem:compile-NMP}, and the fact that
Procedure \ref{alg:Hornification} is sound, program $\Phorn$ is nearly-monadic
and ${\Omega_\T \models \Phorn}$. Now let $\Pmon[\Phorn]$ and $\Prol[\Phorn]$
be the subsets of $\Phorn$ of the rules of type \ref{cond:mon} and
\ref{cond:rol}, respectively. Since ${\Omega_\T \models \Prol[\Phorn]}$, by
the definition of $\Xi_\T$ we have ${\Xi_\T \models \Prol[\Phorn]}$.
Furthermore, if a role atom occurs in the head of a rule in $\Pmon[\Phorn]$,
the atom is of the form $R(z,z)$; hence, each fact involving a role atom in
${\Phorn(\Xi_\T(\A)) \setminus \Xi_\T(\A)}$ is necessarily of the form
$R(c,c)$. But then, such facts clearly cannot trigger a transitivity rule in
$\Xi_\T$ to derive a new fact; furthermore, for each rule ${r \in \Xi_\T}$ of
the form ${R(x,y) \rightarrow S(x,y)}$ or ${R(x,y) \rightarrow S(y,x)}$, we
have ${\Prol[\Phorn] \models r}$; consequently, ${\Xi_\T(\Phorn(\Xi_\T(\A))) =
\Phorn(\Xi_\T(\A))}$, and the property holds.

Thus, ${\T \cup \A \models \alpha}$ if and only if ${\Phorn \cup \Xi_\T \cup
\A \models \alpha}$ for arbitrary fact $\alpha$; but then, for an arbitrary
ground query $Q$, we also have ${\T \cup \A \models Q}$ if and only if
${\Phorn \cup \Xi_\T \cup \A \models Q}$, as required.
\end{proof}  

\section{Proofs for Section \ref{sec:Termination}}\label{sec:Proofs-Termination}

\procwithcondensation*

\begin{proof}
Assume that Procedure \ref{alg:Hornification} derives a clause $C$ in line 5,
and let $D$ be the condensation of $C$. Since Procedure
\ref{alg:Hornification} is sound, we have ${\Shorn \cup \Snhorn \models C}$;
furthermore, since $C$ subsumes $D$, we have ${\{ C \} \models D}$; but then,
we have ${\Shorn \cup \Snhorn \models D}$ as well. It is therefore safe to add
$D$ to $\Shorn$ or $\Snhorn$, so let us assume that Procedure
\ref{alg:Hornification} does so; but then, this makes $C$ redundant since $D$
subsumes $C$ by the definition of condensation.
\end{proof}

\terminationsimple*

\begin{proof}
Let ${\P = \Pmon \cup \Prol}$. Since $\P$ is simple, each rule in $\Pmon$ is
of the form \eqref{eq:simple-rule} with each variable $y_i$ occurring at most
once in the rule, and each rule in $\Prol$ is of the form \eqref{eq:ria-1} or
\eqref{eq:ria-2}.
\begin{align}
    \label{eq:simple-rule}  \bigwedge A_i(x) \wedge \bigwedge R_i(x,x) \wedge \bigwedge S_i(x,y_i) \wedge \bigwedge T_i(y_i,x)  & \rightarrow \bigvee U_i(x,x) \vee \bigvee B_i(x) \\
    \label{eq:ria-1}        R(x,y)                                                                                              & \rightarrow S(x,y) \\
    \label{eq:ria-2}        R(x,y)                                                                                              & \rightarrow S(y,x)
\end{align}
It is now straightforward to check that Procedure \ref{alg:Hornification}
derives only rules of such form: positive factoring is never applicable to a
rule of the form \eqref{eq:simple-rule}--\eqref{eq:ria-2}, and binary
resolution clearly derives only rules of these forms.

Now let $C$ be an arbitrary rule derived in line 5 of Procedure
\ref{alg:Hornification}, and let $D$ be the condensation of $C$; furthermore,
let $n$ be the number of binary atoms occurring in $\P$. Since each variable
in $C$ occurs at most once in the rule, there can be at most $2n$ atoms of the
form $R(x,y_i)$ or $R(y_i,x)$ different up to variable renaming; therefore,
$D$ contains at most $2n$ variables $y_i$. Since the number of predicates in
$D$ is linear in the size of $\P$, the size of each clause is linear in the
size of $\P$ as well. But then, there can be at most exponentially many
different clauses in ${\Phorn \cup \Pnhorn}$, which implies termination of
Procedure \ref{alg:Hornification} using the standard argument
\cite{hms07reasoning}.
\end{proof}

\section{Proofs for Section \ref{sec:LimitsStrong}}\label{sec:Proofs-LimitsStrong}

We first present a well-known characterisation of the entailment of a datalog
rule from a first-order theory. The proof of Proposition
\ref{prop:criterion-entailment} is straightforward and can be found, for
example, in the work by \citeA{gmsh12completeness-guarantees}.

\begin{proposition}\label{prop:criterion-entailment}
    Let $\F$ be a set of first-order sentences, and let $r$ be a datalog rule
    of the form ${C_1 \wedge \ldots \wedge C_n \rightarrow H}$. Then, for each
    substitution $\sigma$ mapping each variable in $r$ to a distinct
    individual not occurring in $\F$ or $r$, we have ${\F \models r}$ if and
    only if
    \begin{align}
        \mathcal{F} \cup \{ \sigma(C_1), \ldots, \sigma(C_n) \} \models \sigma(H).
    \end{align}
\end{proposition}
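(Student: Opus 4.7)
The plan is to prove the two directions of the biconditional separately, exploiting the fact that $\sigma$ maps the variables of $r$ to constants that occur neither in $\F$ nor in $r$. Fix such a $\sigma$ and write $c_x = \sigma(x)$ for each variable $x$ of $r$.

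The forward direction is routine: the rule $r$ is, read as a first-order sentence, ${\forall \vec{x}.(C_1 \wedge \ldots \wedge C_n \to H)}$, so every model of $\F$ satisfies every ground instance, including the instance obtained via $\sigma$. Hence $\F \models \sigma(C_1) \wedge \ldots \wedge \sigma(C_n) \to \sigma(H)$, from which ${\F \cup \{\sigma(C_1), \ldots, \sigma(C_n)\} \models \sigma(H)}$ follows immediately.

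For the backward direction I would argue by contrapositive: suppose $\F \not\models r$, so there exist a model $I$ of $\F$ and a variable assignment $\mu$ of the variables of $r$ into $\Delta^I$ with ${I, \mu \models C_i}$ for all $i$ but ${I, \mu \not\models H}$. The idea is to expand $I$ to an interpretation $I'$ over the signature enriched with the fresh constants $c_x$ by declaring $c_x^{I'} = \mu(x)$, leaving the rest of $I$ unchanged. Since no $c_x$ occurs in $\F$, we have $I' \models \F$; and by construction $I' \models \sigma(C_i)$ for each $i$ while $I' \not\models \sigma(H)$, which witnesses ${\F \cup \{\sigma(C_1), \ldots, \sigma(C_n)\} \not\models \sigma(H)}$.

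There is no substantial obstacle here—the argument is a standard ``fresh constants as free variables'' trick. The only point requiring care is the interpretation expansion in the backward direction: one has to verify explicitly that satisfaction of $\F$ is preserved because the $c_x$ do not occur in $\F$, and that satisfaction of each $\sigma(C_i)$ (respectively $\sigma(H)$) in $I'$ corresponds to satisfaction of $C_i$ (respectively $H$) under $\mu$ in $I$. Both are immediate from the disjointness of the fresh constants from the symbols of $\F$ and $r$.
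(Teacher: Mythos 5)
Your proof is correct and complete. Note that the paper itself offers no proof of Proposition~\ref{prop:criterion-entailment}: it declares the claim straightforward and defers to the work of Grau et al.\ (2012), so your two-direction argument---the routine universal-instantiation step for the forward direction and the ``fresh constants as free variables'' expansion for the converse---is precisely the standard argument that the citation stands in for. One point you use tacitly and should make explicit: the well-definedness of the expansion $c_x^{I'} = \mu(x)$ depends on the hypothesis that $\sigma$ maps the variables of $r$ to \emph{pairwise distinct} individuals, since if two variables $x$ and $y$ shared a constant you would need $\mu(x) = \mu(y)$, which the countermodel witnessing $\F \not\models r$ need not satisfy; this is the only place where distinctness (as opposed to mere freshness) enters, and flagging it would round out an otherwise airtight argument.
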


We are now ready to prove Theorem \ref{th:negative-model-preserving}.

\negativemodelpreserving*

\begin{proof}
Let ${\Q = G(x_1)}$ be a ground query, and let $\T$ be the $\ELU$-TBox
corresponding to the program $\P$ from Example \ref{ex:non-termination-1};
thus, $\T$ consists of axioms \eqref{eq:non-horn-1}--\eqref{eq:horn-3}, which
are translated into disjunctive rules as shown below.
\begin{align}
    \label{eq:non-horn-1}   \top        & \sqsubseteq G \sqcup B    & \rightsquigarrow && \top                      & \rightarrow G(x) \vee B(x) \\ 
    \label{eq:horn-2}       \exists E.G & \sqsubseteq B             & \rightsquigarrow && E(x_1,x_0) \wedge G(x_0)  & \rightarrow B(x_1) \\
    \label{eq:horn-3}       \exists E.B & \sqsubseteq G             & \rightsquigarrow && E(x_1,x_0) \wedge B(x_0)  & \rightarrow G(x_1)
\end{align}

An individual $v$ is \emph{reachable} from an individual $w$ by a path of
length $n$ in an ABox $\A$ if individuals ${u_n, u_{n-1}, \ldots, u_0}$ exist
such that ${E(u_i,u_{i-1}) \in \A}$ for each ${1 \leq i \leq n}$, ${u_n = v}$,
and ${u_0 = w}$. In this proof, we consider 0 to be an even number. We next
prove the following property $(\ast)$, which characterises the answers to $\Q$
on ${\T \cup \A}$:
\begin{quote}
    For each ABox $\A$ containing only the $E$ predicate and for each
    individual $v$, we have ${v \in \cert{\Q}{\T}{\A}}$ iff an individual $w$
    exists such that $v$ is reachable from $w$ by a path of positive even
    length and a path of positive odd length.
\end{quote}

\smallskip

\noindent
(Proof of $\ast$, direction $\Rightarrow$) Let $v$ and $w$ be arbitrary
individuals such that $v$ is reachable from $w$ by a path of even length and a
path of odd length; thus, $\A$ contains sets of assertions of the following
form, where $k$ is a positive even number, $\ell$ is a positive odd number,
${u_k = u'_\ell = v}$, and ${u_0 = u'_0 = w}$:
\begin{align}
    \label{eq:odd-path}  \{ E(u_k,u_{k-1}), \ldots, E(u_1,u_0)              & \} \subseteq \A \\
    \label{eq:even-path} \{ E(u'_\ell,u'_{\ell-1}), \ldots, E(u'_1,u'_0)    & \} \subseteq \A
\end{align}
Let $I$ be an arbitrary model of ${\T \cup \A}$. Due to axiom
\eqref{eq:non-horn-1}, we have the following two possibilities.
\begin{itemize}
    \item Assume that ${w \in G^I}$. Then, axioms \eqref{eq:horn-2} and
    \eqref{eq:horn-3} and the assertions in \eqref{eq:odd-path} ensure that
    ${u_j \in G^I}$ for each even number ${0 \leq j \leq k}$ and ${u_i \in
    B^I}$ for each odd number ${1 \leq i \leq k-1}$; thus, we have ${v \in
    G^I}$. Furthermore, axioms \eqref{eq:horn-2} and \eqref{eq:horn-3} and the
    assertions in \eqref{eq:even-path} ensure that ${u'_i \in G^I}$ for each
    even number ${0 \leq i \leq \ell-1}$ and ${u'_j \in B^I}$ for each odd
    number ${1 \leq j \leq \ell}$; thus, we have ${v \in B^I}$. Consequently,
    we have ${v \in B^I \cap G^I}$.

    \item Assume that ${w \in B^I}$. By a symmetric argument we also conclude
    that ${v \in B^I \cap G^I}$.
\end{itemize}
Thus, we have ${v \in B^I \cap G^I}$ for an arbitrary model $I$ of ${\T \cup
\A}$, so ${v \in \cert{\Q}{\T}{\A}}$, as desired.

\smallskip

\noindent
(Proof of $\ast$, direction $\Leftarrow$) Assume that ${v \in
\cert{\Q}{\T}{\A}}$; furthermore, for the sake of contradiction assume that,
for each individual $w$ occurring in $\A$, each path from $w$ to $v$ in $\A$
is of odd length, or each path from $w$ to $v$ in $\A$ is of even length. Let
$I$ be the interpretation defined as follows:
\begin{itemize}
    \item $\Delta^I$ contains all individuals in $\A$;

    \item ${B^I = \{ w \mid \text{each path from } w \text{ to } v \text{ in }
    \A \text{ is of even length} \} \cup \{ v \}}$;

    \item ${G^I = \{ w \mid \text{each path from } w \text{ to } v \text{ in }
    \A \text{ is of odd length} \}}$; and

    \item ${E^I = \{ \tuple{c,d} \mid E(c,d) \in \A \}}$.
\end{itemize}
If there is no path from an individual $w$ to individual $v$ in $\A$, then
each path from $w$ to $v$ in $\A$ is (vacuously) of both even and odd length,
so ${w \in B^I \cap G^I}$; hence, axioms
\eqref{eq:non-horn-1}--\eqref{eq:horn-3} are satisfied for such $w$.
Furthermore, if $w_1$ is an individual such that each path from $w_1$ to $v$
in $\A$ is of even length, and if $w_2$ satisfies the same property, then each
path from $w_1$ to $w_2$ is also of even length; hence, axioms
\eqref{eq:non-horn-1}--\eqref{eq:horn-3} are satisfied for such $w_1$ and
$w_2$. Finally, if $w_1$ is an individual such that each path from $w_1$ to
$v$ in $\A$ is of odd length, and if $w_2$ satisfies the same property, then
each path from $w_1$ to $w_2$ is also of even length; hence, axioms
\eqref{eq:non-horn-1}--\eqref{eq:horn-3} are satisfied for such $w_1$ and
$w_2$. Thus, have have ${I \models \T \cup \A}$; however, ${v \not\in G^I}$,
which is a contradiction.

\medskip

This completes the proof of property ($\ast$). Now let $\P$ be the following
datalog program, where $\mathit{odd}$ and $\mathit{even}$ are fresh binary
predicates:
\begin{align}
    \label{eq:rew-1} E(x_1,x_0)                                     & \rightarrow \mathit{odd}(x_1,x_0) \\
    \label{eq:rew-2} \mathit{odd}(x_2,x_1) \wedge E(x_1,x_0)        & \rightarrow \mathit{even}(x_2,x_0) \\
    \label{eq:rew-3} \mathit{even}(x_2,x_1) \wedge E(x_1,x_0)       & \rightarrow \mathit{odd}(x_2,x_0) \\
    \label{eq:rew-4} \mathit{odd}(x,y) \wedge \mathit{even}(x,y)    & \rightarrow G(x) \\
    \label{eq:rew-5} E(x_1,x_0) \wedge G(x_0)                       & \rightarrow B(x_1) \\
    \label{eq:rew-6} E(x_1,x_0) \wedge B(x_0)                       & \rightarrow G(x_1)
\end{align}
Furthermore, let $\A$ be an arbitrary ABox, and let $\A'$ be the subset of
$\A$ containing precisely the assertions involving the $E$ predicate. Due to
rules \eqref{eq:rew-1}--\eqref{eq:rew-4}, for each individual $v$ we have ${\P
\cup \A' \models G(v)}$ iff an individual $w$ exists such that $v$ is
reachable from $w$ in $\A'$ via an even and an odd path; by property ($\ast$),
the latter is the case iff ${\T \cup \A' \models G(v)}$. Rules
\eqref{eq:rew-5} and \eqref{eq:rew-6} correspond to axioms \eqref{eq:horn-2}
and \eqref{eq:horn-3}, and they merely `propagate' $G$ and $B$ from
individuals explicitly labelled with $G$ and $B$ in $\A$; hence, it should be
clear that $\P$ is a $\Q$-rewriting of $\T$. Note, however, that $\P$ is not a
strong $\Q$-rewriting of $\T$: it contains fresh predicates $\mathit{odd}$ and
$\mathit{even}$, so ${\T \not\models \P}$.

\medskip

To complete the proof, we next show that no strong $\Q$-rewriting of $\T$
exists. To this end, let $\R$ be the infinite set containing rule
\eqref{eq:entailed-rule} instantiated for each positive even number $n$.
\begin{align}
    \label{eq:entailed-rule} E(x_n,x_0) \wedge E(x_n,x_{n-1}) \wedge \ldots \wedge E(x_1,x_0) \rightarrow G(x_n)
\end{align}
It is straightforward to see that ${\T \models \R}$: one can derive all such
rules using resolution and factoring as shown in Example
\ref{ex:non-termination-1}. We next prove that $\R$ satisfies the following
two properties, which immediately imply the claim of this the theorem.
\begin{enumerate}
    \item ${\P' \models \R}$ for each strong  $\Q$-rewriting $\P'$ of $\T$.

    \item For each finite set of datalog rules $\P'$ such that ${\T \models
    \P'}$, we have ${\P' \not\models \R}$.
\end{enumerate}

\medskip

(Property 1) Assume by contradiction that a strong $\Q$-rewriting $\P'$ of
$\T$ exists such that ${\P' \not\models \R}$; then, there exist a rule ${r \in
\R}$ such that ${\P' \not\models r}$. Let ${C_1, \ldots, C_n}$ be the body
atoms of $r$, and note that the head atom of $r$ is ${Q = G(x_n)}$. Since $r$
is a datalog rule and $\P'$ is a set of first-order formulas, by Proposition
\ref{prop:criterion-entailment}, for each substitution $\sigma$ mapping each
variable in $r$ to a distinct individual, we have ${\P' \cup \{ \sigma(C_1),
\ldots \sigma(C_n) \} \not\models \sigma(Q)}$. Now let $\sigma$ be one such
arbitrarily chosen substitution, and let ${\A = \{ \sigma(C_1), \ldots
\sigma(C_n) \}}$; clearly, we have ${\P' \cup \A \not\models \sigma(Q)}$. In
contrast, ${\R \cup \A \models \sigma(Q)}$, and, due to ${\T \models \R}$, we
have ${\T \cup \A \models \sigma(Q)}$. Thus, $\P'$ is not a strong
$\Q$-rewriting of $\T$, which contradicts our assumption.

\medskip

(Property 2) Let $\P'$ be an arbitrary finite set of datalog rules such that
${\T \models \P'}$, let $m$ be the maximal number of body atoms in a rule in
$\P'$, let $n$ be the smallest even number such that ${n > m}$, and let $\A$
be the following ABox where each $v_i$ is distinct:
\begin{align}
    \A = \{ E(v_n,v_0), E(v_n,v_{n-1}), E(v_{n-1},v_{n-2}), \ldots, E(v_1,v_0) \}
\end{align}
We next show that, for each fact $\alpha$, we have ${\P' \cup \A \models
\alpha}$ iff ${\alpha \in \A}$; this clearly implies ${\P' \cup \A \not\models
G(v_n)}$, which by Proposition \ref{prop:criterion-entailment} implies ${\P'
\not\models \R}$, as required for Property 2. We proceed by contradiction, so
assume that a fact $\alpha$ exists such that ${\P' \cup \A \models \alpha}$
and ${\alpha \not\in \A}$. Then, a rule ${r \in \P'}$ of the form ${r = C_1
\wedge \ldots \wedge C_k \rightarrow H}$ and a substitution $\sigma$ exist
such that, for ${\A' = \{ \sigma(C_1), \ldots, \sigma(C_k) \}}$, we have ${\A'
\subseteq \A}$, ${\alpha = \sigma(H)}$, and ${\alpha \not\in \A}$; note that
${\R \cup \A' \models \alpha}$. We now make the following observations.
\begin{itemize}
    \item Since ${\T \models \P'}$, we have ${\T \cup \A' \models \alpha}$.

    \item Since ${k \leq m < n}$, we have ${\A' \subsetneq \A}$.
    
    \item Let ${r' \in \P'}$ be an arbitrary non-tautological rule of the form
    ${r' = C_1' \wedge \ldots \wedge C'_\ell \rightarrow H'}$. Since ${\T
    \models \P'}$, we have ${\T \models r'}$. The latter, however, is possible
    only if $H'$ is a unary atom involving the $G$ or the $B$ predicate, and
    each $C_i'$ is an atom involving the $G$, $B$, or $E$ predicate. Thus,
    either ${\alpha = B(v_i)}$ or ${\alpha = G(v_i)}$ for some integer $i$.
    
    \item By property ($\ast$), we have ${\T \cup \A' \not\models G(v_j)}$ and
    ${\T \cup \A' \not\models B(v_j)}$ for each ${n > j \geq 0}$ since each
    such individual $v_j$ is reachable from other individuals in $\A'$ by at
    most one path. Thus, we have ${i = n}$ in the previous item.
    
    \item Individual $v_n$ is reachable from $v_0$ via two paths in $\A$;
    furthermore, due to ${\A' \subsetneq \A}$, individual $v_n$ is reachable
    from $v_0$ in $\A'$ via at most one path. Therefore, by property ($\ast$),
    we have ${\T \cup \A' \not\models G(v_n)}$ and ${\T \cup \A' \not\models
    B(v_n)}$.
\end{itemize}
The above four points are clearly in contradiction, which completes our proof.
\end{proof} 

}{
}
\end{document}